\documentclass{article}

\PassOptionsToPackage{capitalise,noabbrev}{cleveref}
\PassOptionsToPackage{numbers}{natbib}

\usepackage[final]{neurips2023}

\usepackage[utf8]{inputenc} %
\usepackage[T1]{fontenc}    %
\usepackage{hyperref}       %
\usepackage{url}            %
\usepackage{booktabs}       %
\usepackage{amsfonts}       %
\usepackage{nicefrac}       %
\usepackage{microtype}      %
\usepackage{xcolor}         %
\usepackage{amsthm}

\usepackage{graphicx}
\usepackage{subfig}
\usepackage{wrapfig}
\usepackage{titletoc}
\usepackage{amsmath}
\usepackage{amssymb}

\usepackage{multirow}
\usepackage{array}
\usepackage{multicol}
\usepackage{tabularx}
\usepackage{xcolor}

\usepackage[utf8]{inputenc}
\usepackage{microtype}
\usepackage{graphicx}
\usepackage{booktabs} %
\usepackage{titletoc}
\usepackage{hyperref}

\usepackage{amsmath}
\usepackage{amssymb}
\usepackage{bbm} 
\usepackage{bm}
\usepackage{verbatim}
\usepackage{float}
\usepackage{color,soul}
\usepackage{enumitem}
\usepackage{mathtools}
\usepackage{hhline}
\usepackage[title]{appendix}
\usepackage[nameinlink]{cleveref} %
\usepackage{tikz}
\usepackage{wrapfig,booktabs}
\usepackage{xspace}
\usepackage{cancel}
\usepackage{sidecap}

\graphicspath{ {../figures/} }

\DeclarePairedDelimiterX{\infdivx}[2]{(}{)}{%
  #1\;\delimsize\|\;#2%
}

\crefname{appsec}{appendix}{appendices}
\Crefname{appsec}{Appendix}{Appendices}

\definecolor{mydarkblue}{rgb}{0,0.08,0.45}
\hypersetup{ %
    pdftitle={},
    pdfauthor={},
    pdfsubject={Proceedings of the International Conference on Machine Learning 2020},
    pdfkeywords={},
    pdfborder=0 0 0,
    pdfpagemode=UseNone,
    colorlinks=true,
    linkcolor=mydarkblue,
    citecolor=mydarkblue,
    filecolor=mydarkblue,
    urlcolor=mydarkblue,
    pdfview=FitH
}

\usetikzlibrary{shapes.geometric, arrows, bayesnet, calc, positioning}

\newcommand{\calL}{\mathcal{L}}

\newcommand{\R}{\mathbb{R}}

\newcommand{\closer}[3]{{\kern-#1ex{#2}\kern-#3ex}}

\newcommand{\norm}[1]{\left\lVert#1\right\rVert}

\DeclareMathOperator*{\argmax}{arg\,max}

\mathchardef\mhyphen="2D

\DeclareMathOperator{\E}{\mathbb{E}}

\newcommand{\D}{\mathcal{D}}
\newcommand{\N}{\mathcal{N}}
\usepackage{physics}

\usepackage{titletoc}
\usepackage{algorithm}
\usepackage{algorithmic}
\usepackage{colortbl}

\usepackage{float}
\usepackage{subfig}
\usepackage{physics}

\definecolor{azure}{rgb}{0.0, 0.5, 1.0}
\definecolor{airforceblue}{rgb}{0.36, 0.54, 0.66}
\definecolor{darkgreen}{rgb}{0.0, 0.2, 0.13}

\newcommand\defines{\,\dot{=}\,}

\newcommand{\map}{\textsc{map}\xspace}
\newcommand{\fsmap}{\textsc{fs-map}\xspace}

\newcommand{\jac}{J}
\newcommand{\calX}{\mathcal{X}}
\newcommand{\calY}{\mathcal{Y}}
\newcommand{\calD}{\mathcal{D}}

\newcommand{\calQ}{\mathcal{Q}}

\newcommand{\pms}[1]{\ensuremath{{\scriptstyle\pm #1}}}

\usepackage{standalone}
\usepackage{tikz}
\usepackage{pgfplots}
\usepackage{pgf}
\usetikzlibrary{calc}
\usetikzlibrary{positioning}
\usetikzlibrary{angles,quotes}
\usetikzlibrary{backgrounds}
\usetikzlibrary{fit}
\usetikzlibrary{arrows}
\usetikzlibrary{arrows.meta}
\usetikzlibrary{shapes.symbols}
\usetikzlibrary{shadings}
\usetikzlibrary{shapes}
\usetikzlibrary{fadings}
\usetikzlibrary{bayesnet}
\usetikzlibrary{matrix}
\usetikzlibrary{plotmarks}
\usetikzlibrary{intersections}
\usetikzlibrary{pgfplots.fillbetween}
\pgfplotsset{compat=1.14}
\usepackage{siunitx}

\usepackage{colortbl}
\definecolor{mediumgray}{gray}{0.7}
\definecolor{lightgray}{gray}{0.85}
\definecolor{lightlightgray}{gray}{0.9}
\definecolor{C1}{HTML}{1F77B4}
\definecolor{C2}{HTML}{FF7F0E}
\definecolor{C3}{HTML}{2CA02C}
\definecolor{C4}{HTML}{D62728}
\definecolor{C5}{HTML}{9467BD}
\colorlet{C1light}{C1!70!white}
\colorlet{C2light}{C2!70!white}
\colorlet{C3light}{C3!70!white}
\colorlet{C4light}{C4!70!white}
\colorlet{C5light}{C5!70!white}
\colorlet{C1lighter}{C1!50!white}
\colorlet{C2lighter}{C2!50!white}
\colorlet{C3lighter}{C3!50!white}
\colorlet{C4lighter}{C4!50!white}
\colorlet{C5lighter}{C5!50!white}
\colorlet{C1vlight}{C1!20!white}
\colorlet{C2vlight}{C2!20!white}
\colorlet{C3vlight}{C3!20!white}
\colorlet{C4vlight}{C4!20!white}
\colorlet{C5vlight}{C5!20!white}
\colorlet{linkcolor}{violet}

\newcommand{\psmap}{\textsc{ps-map}\xspace}
\newcommand{\lmap}{\textsc{l-map}\xspace}

\usepackage{tcolorbox}

\renewcommand{\E}[2]{\mathbb{E}_{#1}\qty[#2]}

\newcommand{\pX}{p_{X}}
\newcommand{\xeval}{\hat{x}}

\renewcommand{\d}{\mathrm{d}}
\newcommand{\dx}{\mathrm{d}x}
\newcommand{\J}{\mathcal{J}}

\definecolor{mydarkblue}{rgb}{0,0.08,0.45}
\hypersetup{ %
    pdftitle={},
    pdfauthor={},
    pdfsubject={},
    pdfkeywords={},
    pdfborder=0 0 0,
    pdfpagemode=UseNone,
    colorlinks=true,
    linkcolor=mydarkblue,
    citecolor=mydarkblue,
    filecolor=mydarkblue,
    urlcolor=mydarkblue,
    pdfview=FitH
}

\newtheorem{theorem}{Theorem}

\title{Should We Learn\\Most Likely Functions or Parameters?}

\author{%
  Shikai Qiu\thanks{Equal contribution.} \qquad
  Tim G. J. Rudner$^*$ \qquad
  Sanyam Kapoor$^*$ \qquad
  Andrew Gordon Wilson \\ \\
  New York University
}

\begin{document}

\maketitle

\begin{abstract}
Standard regularized training procedures correspond to maximizing a posterior distribution over parameters, known as maximum a posteriori (MAP) estimation. However, model parameters are of interest only insomuch as they combine with the functional form of a model to provide a function that can make good predictions. Moreover, the most likely parameters under the parameter posterior do not generally correspond to the most likely function induced by the parameter posterior. In fact, we can re-parametrize a model such that any setting of parameters can maximize the parameter posterior. As an alternative, we investigate the benefits and drawbacks of directly estimating the most likely function implied by the model and the data. We show that this procedure leads to pathological solutions when using neural networks and prove conditions under which the procedure is well-behaved, as well as a scalable approximation. Under these conditions, we find that function-space MAP estimation can lead to flatter minima, better generalization, and improved robustness to overfitting.
\end{abstract}

\section{Introduction}

Machine learning has matured to the point where we often take key design decisions for granted. One of the most fundamental such decisions is the loss function we use to train our models.
Minimizing standard regularized loss functions, including cross-entropy for classification and mean-squared or mean-absolute error for regression, with $\ell_1$ or $\ell_2$ regularization, exactly corresponds to maximizing a posterior distribution over model parameters \citep{Bishop2006PatternRA, murphy2013probabilistic}.
This standard procedure is known in probabilistic modeling as \emph{maximum a posteriori} (\map) parameter estimation.
However, parameters have no meaning independent of the functional form of the models they parameterize.
In particular, our models $f_\theta(x)$ are functions given parameters $\theta$, which map inputs $x$ (e.g., images, spatial locations, etc.) to targets (e.g., softmax probabilities, regression outputs, etc.). We are typically only directly interested in the function and its properties, such as smoothness, which we use to make predictions. 

\begin{figure}[!t]
\centering
\vspace*{5pt}
    \subfloat[Distinct Posterior Densities]{
    \includegraphics[width=0.36\linewidth, trim={0 15pt 0 0},clip]{./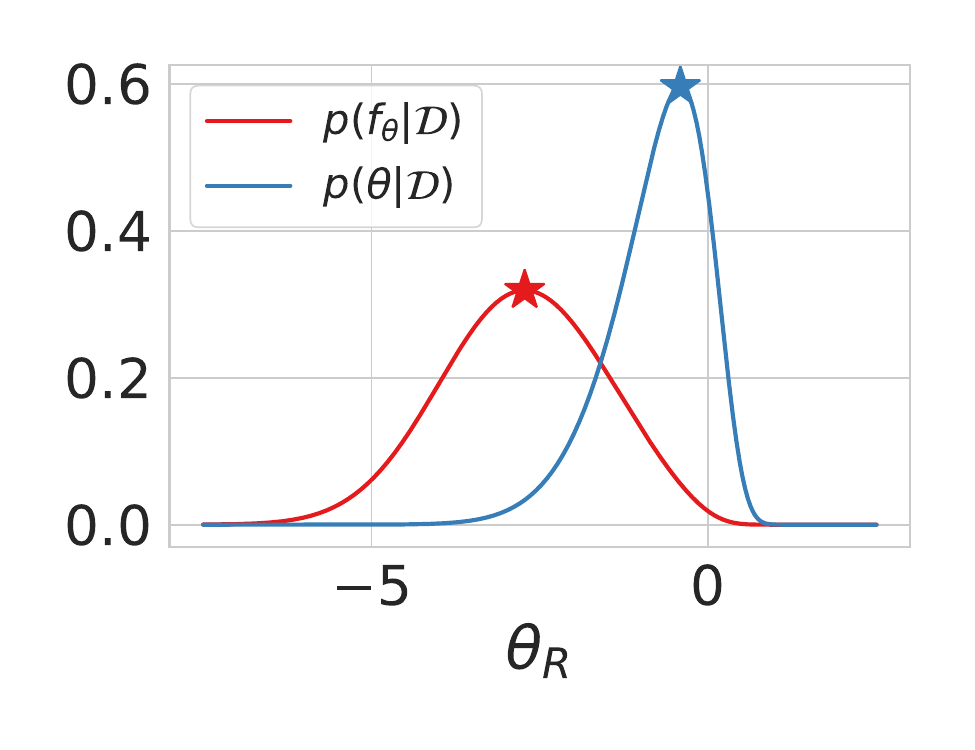}
    \label{fig:topfig_a}
    }
    \subfloat[Distinct Learned Functions]{
    \includegraphics[width=0.37\linewidth, trim={0 15pt 0 0},clip]{./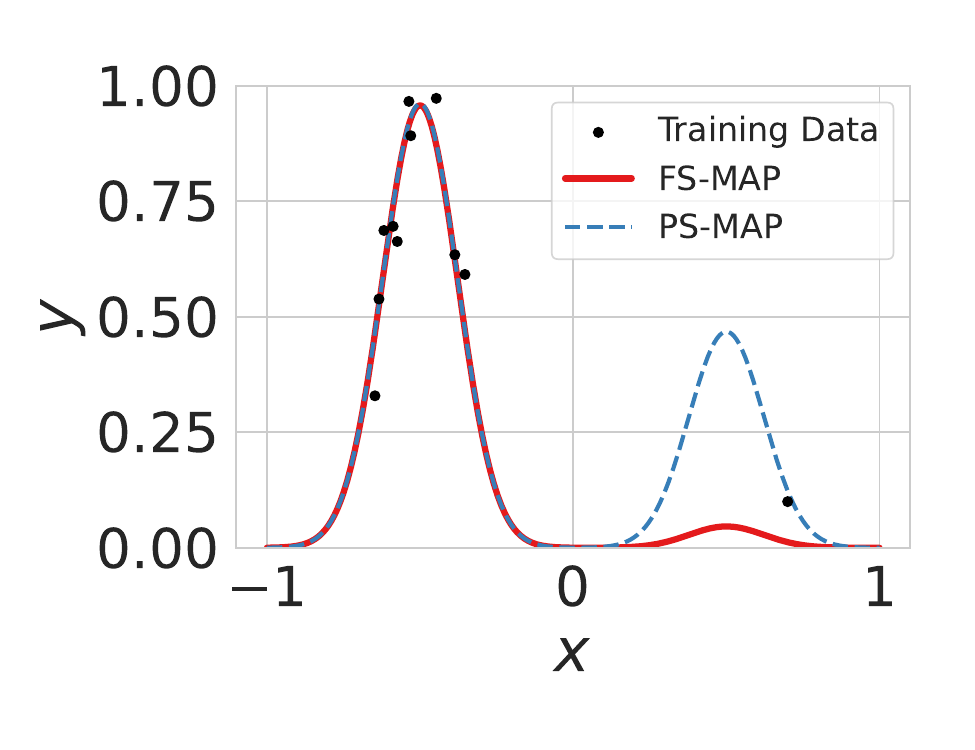}
    \label{fig:topfig_b}
    }
    \vspace*{5pt}
   \caption{
    \small \textbf{Illustration of the Difference Between Most Likely Functions and Parameters.}
    The function that is most probable (denoted as \textcolor{red}{\fsmap}) under the posterior distribution over functions can substantially diverge from the function represented by the most probable parameters (denoted as \textcolor{blue}{\psmap}) under the posterior distribution over parameters.
    We illustrate this fact with a regression model with two parameters, $\theta_L$ and $\theta_R$, both with a prior $\N(0, 1.2^2)$. This model is used to learn a mixture of two Gaussians with fixed mean and variance, where the mixture coefficients are given by $\exp(\theta_L)$ and $\exp(\theta_R)$.
    Both the most probable solution and the posterior density show significant differences when analyzed in function-space versus parameter-space.
    Since $\theta_L$ is well-determined, we only plot the posterior as a function of $\theta_R.$ We normalize the area under $p(f_\theta|\D)$ to 1.
   }
    \label{fig:topfig}
\end{figure}

Alarmingly, the function corresponding to the most likely parameters under the parameter posterior does not generally correspond to the most likely function under the function posterior.
For example, in \Cref{fig:topfig_a}, we visualize the posterior over a mixture coefficient $\theta_R$ in a Gaussian mixture regression model in both parameter and function space, using the same Gaussian prior for the mixture coefficients (corresponding to $\ell_2$ regularization in parameter space).
We see that each distribution is maximized by a different parameter $\theta_R$, leading to very different learned functions in \Cref{fig:topfig_b}.
Moreover, we can re-parametrize the functional form of any model such that any arbitrary setting of parameters maximizes the parameter posterior (we provide further discussion and an example in \Cref{appsec:psmap_reparam}).

\emph{Should we then be learning the most likely functions or parameters?} As we will see, the nuanced pros and cons of each approach are fascinating and often unexpected. 

On one hand, we might expect the answer to be a clear-cut ``We should learns most likely functions!''.
In \Cref{sec:fsmap}, we present a well-defined function-space \map objective through a generalization of the change of variables formula for probability densities. In addition to functions being the direct quantity of interest and the function-space \map objective being invariant to the parametrization of our model, we show that optimization of the function-space \map objective indeed results in more probable functions than standard parameter-space \map, and these functions often correspond to \emph{flat minima} \citep{Hochreiter1997FlatM, keskar2016large} that are more robust to overfitting. 

On the other hand, function-space \map is not without its own pathologies and practical limitations.
We show in \Cref{sec:fsmap_limitations} that the Jacobian term in the function-space \map objective admits trivial solutions with infinite posterior density and can require orders of magnitude more computation and memory than parameter-space \map, making it difficult to scale to modern neural networks.
We also show that function-space \map will not necessarily be closer than parameter-space \map to the posterior expectation over functions, which in Bayesian inference forms the mean of the posterior predictive distribution and has desirable generalization properties \citep{Bishop2006PatternRA, murphy2013probabilistic, wilson2020bayesian}. 
To help address the computational limitations, we provide in \Cref{sec:fsmap_limitations} a scalable approximation to the function-space \map objective applicable to large neural networks using Laplacian regularization, which we refer to as \lmap.
We show empirically that parameter-space \map is often able to perform on par with \lmap in accuracy, although \lmap tends to improve calibration.

The aim of this paper is to improve our understanding of what it means to learn most likely functions instead of parameters.
Our analysis, which includes theoretical insights as well as experiments with both carefully designed tractable models and neural networks, paints a complex picture and unearths distinct benefits and drawbacks of learning most likely functions instead of parameters.
We conclude with a discussion of practical considerations around function-space \map estimation, its relationship to flat minima and generalization, and the practical relevance of parameterization invariance.

Our code is available at \url{https://github.com/activatedgeek/function-space-map}.

\section{Preliminaries}
\label{sec:prelims}
We consider supervised learning problems with a training dataset \mbox{${\calD = (x_\calD, y_\calD) = \{x_\calD^{(i)}, y_\calD^{(i)}\}_{i=1}^N}$} of inputs \mbox{$x \in \calX$} and targets \mbox{$y \in \calY$} with input space \mbox{$\calX \subseteq \R^D$} and output space \mbox{$\calY \subseteq \R^K$.} For example, the inputs $x$ could correspond to times, spatial locations, tabular data, images, and the targets $y$ to regression values, class labels, etc.

To create a model of the data $\calD$, one typically starts by specifying a function $f_{\theta} : \calX \rightarrow \calY$ parametrized by a vector $\theta \in \R^P$ which maps inputs to outputs.
$f_{\theta}$ could be a neural network, a polynomial model, a Fourier series, and so on.
Learning typically amounts to estimating model parameters $\theta$.
To this end, we can relate the function $f_{\theta}$ to the targets through an \emph{observation model}, $p(y | x, f_{\theta})$.
For example, in regression, we could assume the outputs ${y = f_{\theta}(x) + \epsilon}$, where ${\epsilon \sim \mathcal{N}(0,\sigma^2)}$ is additive Gaussian noise with variance $\sigma^2$.
Equivalently, ${p(y | x,\theta) = \mathcal{N}(y|f_{\theta}(x),\sigma^2)}$.
Alternatively, in classification, we could specify ${p(y | x, \theta) = \mathrm{Categorical}(\mathrm{softmax}(f_{\theta}(x))})$.
We then use this observation model to form a \emph{likelihood} over the whole dataset $p(y_{\calD} | x_{\calD}, \theta)$.
In each of these example observation models, the likelihood factorizes, as the data points are conditionally independent given model parameters $\theta$, ${p(y_{\calD} | x_{\calD} , \theta) = \prod\nolimits_{i=1}^N p(y^{(i)}_{\calD} | x^{(i)}_{\calD} , \theta)}$.
We can further express a belief over values of parameters through a prior $p(\theta)$, such as ${p(\theta) = \mathcal{N}(\theta | \mu,\Sigma)}$.
Finally, using Bayes rule, the log of the posterior up to a constant $c$ independent of $\theta$ is,
\begin{align}
\label{eq:psmap}
    \log \overbrace{p(\theta | y_{\calD}, x_{\calD})}^{\text{parameter posterior}} = \log \overbrace{p(y_{\calD} | x_{\calD}, \theta)}^{\text{likelihood}} + \log \overbrace{p(\theta)}^{\text{prior}} +~ c .
\end{align}

Notably, standard loss functions are negative log posteriors, such that maximizing this parameter posterior, which we refer to as \emph{parameter-space} maximum a posteriori (\psmap) estimation, corresponds to minimizing standard loss functions \citep{murphy2013probabilistic}.
For example, if the observation model is regression with Gaussian noise or Laplace noise, the negative log likelihood is proportional to mean-square or mean-absolute error functions, respectively.
If the observation model is a categorical distribution with a softmax link function, then the log likelihood is negative cross-entropy.
If we use a zero-mean Gaussian prior, we recover standard $\ell_2$ regularization, also known as weight-decay.
If we use a Laplace prior, we recover $\ell_1$ regularization, also known as LASSO \citep{Tibshirani1996RegressionSA}.

Once we maximize the posterior to find
\begin{align}
    {\hat{\theta}^{\psmap} = \argmax\nolimits_{\theta} p(\theta | y_{\calD}, x_{\calD})} ,
\end{align}
we can condition on these parameters to form our function $f_{\hat{\theta}^{\psmap}}$ to make predictions.
However, as we saw in \Cref{fig:topfig}, $f_{\hat{\theta}^{\psmap}}$ is not in general the function that maximizes the posterior over functions, $p(f_{\theta} | y_{\calD}, x_{\calD})$.
In other words, if
\begin{align}
    \hat{\theta}^{\fsmap} = \argmax\nolimits_{\theta} p(f_{\theta} | y_{\calD}, x_{\calD})
\end{align}
then generally $f_{\hat{\theta}^{\psmap}} \neq f_{\hat{\theta}^{\fsmap}}$.
Naively, one can write the log posterior over $f_{\theta}$ up to the same constant $c$ as above as
\begin{align}
\label{eq:fsmap-naive}
    \log \overbrace{p(f_{\theta} | y_{\calD}, x_{\calD})}^{\text{function posterior}} = \log \overbrace{p(y_{\calD} | x_{\calD}, f_{\theta})}^{\text{likelihood}} + \log \hspace*{-5pt}\overbrace{p(f_{\theta})}^{\text{function prior}}\hspace*{-5pt} +~ c ,
\end{align}
where ${p(y_{\calD} | x_{\calD}, f_{\theta})} ={p(y_{\calD} | x_{\calD}, \theta)}$, but just written in terms of the function $f_{\theta}$.
The prior $p(f_{\theta})$ however is a different function from $p(\theta)$, because we incur an additional Jacobian factor in this change of variables, making the posteriors also different. 

We must take care in interpreting the quantity $p(f_{\theta})$ since probability densities in infinite-dimensional vector spaces are generally ill-defined. While prior work \citep{klarner2023qsavi, rudner2022fsvi, rudner2022sfsvi, rudner2023fseb,wolpert1993fsmap} avoids this problem by considering a prior only over functions evaluated at a finite number of evaluation points, we provide a more general objective that enables the use of infinitely many evaluation points to construct a more informative prior and makes the relevant design choices of function-space \map estimation more interpretable.

\section{Understanding Function-Space Maximum A Posteriori Estimation}
\label{sec:fsmap}

Function-space \map estimation seeks to answer a fundamentally different question than parameter-space \map estimation, namely, what is the most likely function under the posterior distribution over functions implied by the posterior distribution over parameters, rather than the most likely parameters under the parameter posterior.

To better understand the benefits and shortfalls of function-space \map estimation, we derive a function-space \map objective that generalizes the objective considered by prior work and analyze its properties both theoretically and empirically.

\pagebreak

\vspace*{-2pt}
\subsection{The Finite Evaluation Point Objective}
\vspace*{-2pt}

Starting from \Cref{eq:fsmap-naive}, \citet{wolpert1993fsmap} proposed to instead find the \map estimate for $f_\theta(\xeval),$ the function evaluated at a finite set of points $\xeval = \{x_1, ..., x_M\}$, where $M < \infty$ can be chosen to be arbitrarily large so as to capture the behavior of the function to arbitrary resolution.
This choice then yields the \fsmap optimization objective
\begin{align}
\label{eq:objective_fsmap_implicit}
    \calL_\mathrm{finite}(\theta; \xeval)
    =
    \sum\nolimits_{i=1}^N \log{p(y^{(i)}_{\calD} | x^{(i)}_{\calD} , f_{\theta}(\xeval))} + \log{p\qty(f_{\theta}(\xeval))} ,
\end{align}
where $p(f_{\theta}(\xeval))$ is a well-defined but not in general analytically tractable probability density function.
(See \Cref{appsec:pushforward} for further discussion.) Let 
$P$ be the number of parameters $\theta$, and $K$ the number of function outputs. Assuming the set of evaluation points is sufficiently large so that $M K \geq P$, using a generalization of the change of variable formula that only assumes injectivity rather than bijectivity,~\citet{wolpert1993fsmap} showed that the prior density over $f(\xeval)$ is given by
\begin{align}
\begin{split}
\label{eq:fs-prior}
    &
    p\qty(f_{\theta}(\xeval))
    =
    p(\theta) \, \mathrm{det}^{-1/2}(\mathcal{J}(\theta; \xeval)) ,
\end{split}
\end{align}
where $\mathcal{J}(\theta; \xeval)$ is a $P$-by-$P$ matrix defined by\vspace*{-3pt}
\begin{align}
\begin{split}
\label{eq:jac_contract}
    \mathcal{J}_\mathrm{finite}(\theta; \xeval)
    \defines
    \jac_{\theta}(\xeval)^\top \jac_{\theta}(\xeval)
\end{split}
\end{align}
and $\jac_{\theta}(\xeval) \defines \partial f_{\theta}(\xeval) / \partial \theta$ is the $MK$-by-$P$ Jacobian of $f_{\theta}(\xeval)$, viewed as an $MK$-dimensional vector, with respect to the parameters $\theta$.
Substituting \Cref{eq:fs-prior} into \Cref{eq:objective_fsmap_implicit} the function-space \map objective as a function of the parameters $\theta$ can be expressed as
\begin{align}
\label{eq:finite-fs-map}
    \calL_\mathrm{finite}(\theta; \xeval)
    =
    \sum\nolimits_{i=1}^N \log{p(y^{(i)}_{\calD} | x^{(i)}_{\calD} , f_{\theta}}) + \log{p(\theta) - \frac{1}{2}\log\det(\mathcal{J}(\theta; \xeval))},
\end{align}
where we are allowed to condition directly on $f_\theta$ instead of $f_\theta(\xeval)$ because by assumption $\xeval$ is large enough to uniquely determine the function. In addition to replacing the function, our true quantity of interest, with its evaluations at a finite set of points $\hat{x},$ this objective makes it unclear how we should select $\hat{x}$ and how that choice can be interpreted or justified in a principled way.

\vspace*{-2pt}
\subsection{Deriving a More General Objective and its Interpretation}
\label{sec:general-objective}
\vspace*{-2pt}

We now derive a more general class of function-space \map objectives that allow us to use effectively infinitely many evaluation points and meaningfully interpret the choice of those points.

In general, when performing a change of variables from $v \in \R^n$ to $u \in \R^m$ via an injective map $u = \varphi(v),$ their probability densities relate as $p(u)\d\mu(u) = p(v)\d\nu(v),$ where $\mu$ and $\nu$ define respective volume measures. Suppose we let $\d\mu(u) = \sqrt{\det(g(u))} \d^m u,$ the volume induced by an $M\times M$ metric tensor $g,$ and $\d\nu(v) = \d^n v$, the Lebesgue measure, then we can write $\d\mu(u) = \sqrt{\det(\tilde{g}(v))} \d\nu(v),$ where the $N\times N$ metric $\tilde{g}(v) = J(v)^\top g(u) J(v)$ is known as the pullback of $g$ via $\varphi$ and $J$ is the Jacobian of $\varphi$~\citep{dodson2013tensor}. As a result, we have $p(u) = p(v) \mathrm{det}^{-1/2}(J(v)^\top g(u) J(v)).$ Applying this argument and identifying $u$ with $f_\theta(\hat{x})$ and $v$ with $\theta,$  \citet{wolpert1993fsmap} thereby establishes $ p\qty(f_{\theta}(\xeval))
    =
    p(\theta) \, \mathrm{det}^{-1/2}(\jac_{\theta}(\xeval)^\top \jac_{\theta}(\xeval)).$

However, an important implicit assumption in the last step is that the metric in function space is Euclidean. That is, $g = I$ and the squared distance between $f_\theta(\xeval)$ and $f_\theta(\xeval) + \d f_\theta(\xeval)$ is $\smash{\mathrm{d}s^2 = \sum_{i=1}^{M} \mathrm{d} f_{\theta}(x_i)^2}$, rather than the general case $\mathrm{d}s^2 = \sum_{i=1}^{M} \sum_{j=1}^{M} g_{ij} \mathrm{d} f_{\theta}(x_i) \mathrm{d} f_{\theta}(x_j).$ To account for a generic metric $g,$ we therefore replace $\jac_{\theta}(\xeval)^\top \jac_{\theta}(\xeval)$ with $\jac_{\theta}(\xeval)^\top g\jac_{\theta}(\xeval).$
For simplicity, we assume the function output is univariate ($K=1$) and only consider $g$ that is constant i.e., independent of $f_{\theta}(\xeval)$ and diagonal, with $g_{ii} = g(x_i)$ for some function $g: \calX \rightarrow \R^+$. For a discussion of the general case, see~\Cref{appsec:general-case}. To better interpret the choice of $g,$ we rewrite 
\begin{align}
    \label{eq:JgJ}
    \jac_{\theta}(\xeval)^\top g\jac_{\theta}(\xeval)
    =
    \sum\nolimits_{j = 1}^{M} \tilde{p}_X(x_j) \jac_{\theta}(x_{j})^\top \jac_{\theta}(x_{j}) ,
\end{align}
where we suggestively defined the alias $\tilde{p}_X(x) \defines g(x)$ and $\jac_{\theta}(x_{j})$ is the $K$-by-$P$-dimensional Jacobian evaluated at the point $x_{j}$. 
Generalizing from the finite and discrete set $\xeval$ to the possibly infinite entire domain $\calX \subseteq \mathbb{R}^{D}$ and further dividing by an unimportant normalization constant $Z$, we obtain
\begin{align}
    \mathcal{J}(\theta; \pX)
    \defines
    \frac{1}{Z} \int\nolimits_\calX \tilde{p}_X(x) \jac_{\theta}(x)^\top \jac_{\theta}(x) \, \mathrm{d}x
    =
    \E{\pX}{\jac_{\theta}(X)^\top \jac_{\theta}(X)} ,
    \label{eq:continous-fs-map-J}
\end{align}
where $\pX = \tilde{p}_X / Z$ is a normalized probability density function, with normalization $Z$---which is independent of $\theta$---only appearing as an additive constant in $\log\det\mathcal{J}(\theta; \pX)$. We include a further discussion about this limit in \Cref{appsec:continuum-limit}. 

Under this limit, we can identify $\log p(\theta) - \frac{1}{2} \log\det\mathcal{J}(\theta; \pX)$ with $\log p(f_\theta)$ up to a constant, and thereby express $\log p(f_\theta | \D),$ the function-space \map objective in~\Cref{eq:fsmap-naive} as
\begin{align}
\label{eq:continuous-fs-map}
    \calL(\theta; \pX)
    =
    \sum\nolimits_{i=1}^N \log{p(y^{(i)}_{\calD} | x^{(i)}_{\calD} , f_{\theta}}) + \log{p(\theta) - \frac{1}{2}\log\det\mathcal{J}(\theta; \pX)},
\end{align}
where the choice of $\pX$ corresponds to a choice of the metric $g$ in function space. \Cref{eq:continuous-fs-map} (and approximations thereof) will be the primary object of interest in the remainder of this paper.

\textbf{Evaluation Distribution as Function-Space Geometry.}\quad
From this discussion and \Cref{eq:continuous-fs-map,eq:continous-fs-map-J}, it is now clear that the role of the metric tensor $g$ is to impose a distribution $\pX$ over the evaluation points. Or conversely, a given distribution over evaluation points implies a metric tensor, and as such, specifies the geometry of the function space.
This correspondence is intuitive: points $x$ with higher values of $g(x)$ contribute more to defining the geometry in function space and therefore should be assigned higher weights under the evaluation distribution when maximizing function space posterior density.
Suppose, for example, $f_{\theta}$ is an image classifier for which we only care about its outputs on set of natural images $\mathcal{I}$ when comparing it to another image classifier. 
The metric $g(\cdot)$ and therefore the evaluation distribution $\pX$ only needs support in $\mathcal{I}$, and  the \fsmap objective is defined only in terms of the Jacobians evaluated at natural images $x \in \mathcal{I}$.

\textbf{Finite Evaluation Point Objective as a Special Case.}\quad
Consequently, the finite evaluation point objective in \Cref{eq:finite-fs-map} can be arrived at by specifying the evaluation distribution $\pX(x)$ to be $\hat{p}_{X}(x) \defines \frac{1}{M} \sum_{x' \in \xeval} \delta(x - x')$, where $\delta$ is the Dirac delta function and $\xeval = \{ x_1, ..., x_M \} $ with $M < \infty$, as before.
It is easy to see that $\mathcal{J}_\mathrm{finite}(\theta; \xeval) \propto \mathcal{J}(\theta; \hat{p}_{X})$.
Therefore, the objective proposed by \citet{wolpert1993fsmap} is a special case of the more general class of objectives.

\vspace*{-2pt}
\subsection{Investigating the Properties of Function-Space \map Estimation}
\label{sec:fsmap_props}
\vspace*{-2pt}

To illustrate the properties of \fsmap, we consider the class of models $f_{\theta}(x) = \sum_{i=1}^{P} \sigma(\theta_i) \varphi_i(x)$ with domain $\calX = [-1,1],$ where $\{\varphi_i\}_{i=1}^P$ is a fixed set of basis functions and $\sigma$ is a non-linear function to introduce a difference between function-space and parameter-space \map. The advantage of working with this class of models is that $\mathcal{J}(\theta; \pX)$ has a simple closed form, such that
\begin{equation}
    \J_{ij}(\theta; \pX)
    =
    \E{\pX} {\partial_{\theta_i} f_{\theta}(X) \partial_{\theta_j} f_{\theta}(X)} = \sigma'(\theta_i) \sigma'(\theta_j) \Phi_{ij},
\end{equation}
where $\Phi$ is a constant matrix with elements $\Phi_{ij} = \E{\pX}{\varphi_i(X) \varphi_j(X)}$.
Therefore, $\Phi$ can be precomputed once and reused throughout training. 
In this experiment, we use the set of Fourier features $\{\cos(k_i \cdot), \sin(k_i \cdot )\}_{i=1}^{100}$ where $k_i = i\pi$ and set $\sigma=\tanh$.
We generate training data by sampling $x_\mathrm{train} \sim \mathrm{Uniform}(-1,1)$, $\theta_i \sim \N(0, \alpha^2)$ with $\alpha=10$, evaluating $f_{\theta}(x_\mathrm{train})$, and adding Gaussian noise with standard deviation $\sigma^* = 0.1$ to each observation.
We use 1,000 test points sampled from $\mathrm{Uniform}(-1,1)$.
To train the model, we set the prior $p(\theta)$ and the likelihood to correspond to the data-generating process.
For \fsmap, we specify \mbox{$\pX = \mathrm{Uniform}(-1,1)$}, the ground-truth distribution of test inputs, which conveniently results in $\Phi = I/2.$

\begin{figure}[!b]
\centering
\vspace*{-20pt}
    \subfloat[Log Posterior Increase]{
    \includegraphics[width=.26\linewidth, trim={0 30pt 0 0},clip]{./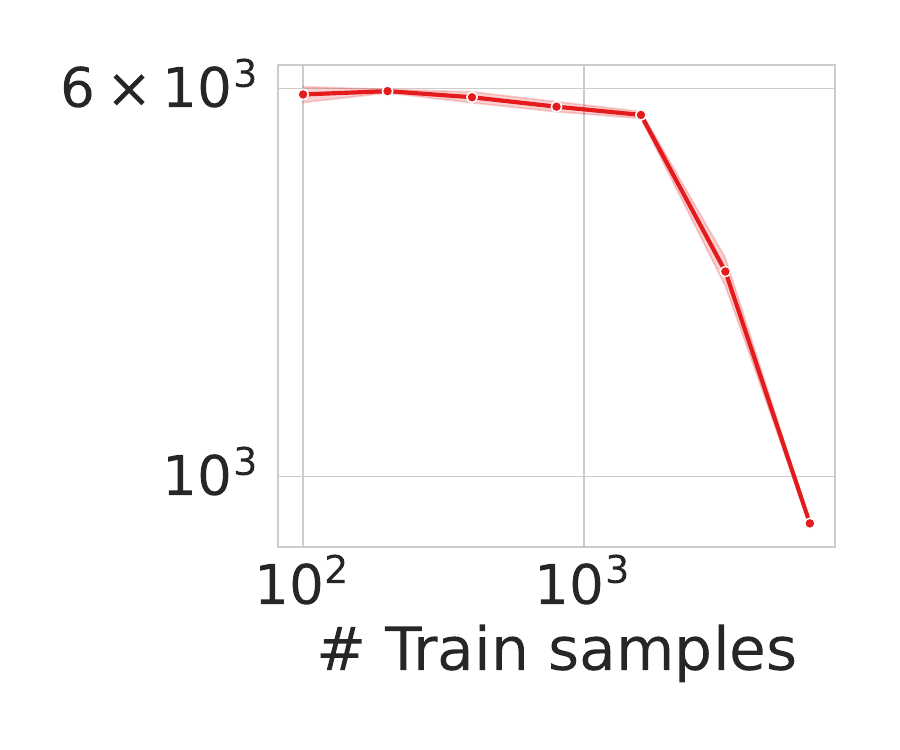}
    \label{fig:log_fs_posterior}
    }
    \subfloat[Loss Curvature]{
    \hspace{-5mm}   
    \includegraphics[width=.26\linewidth, trim={0 30pt 0 0},clip]{./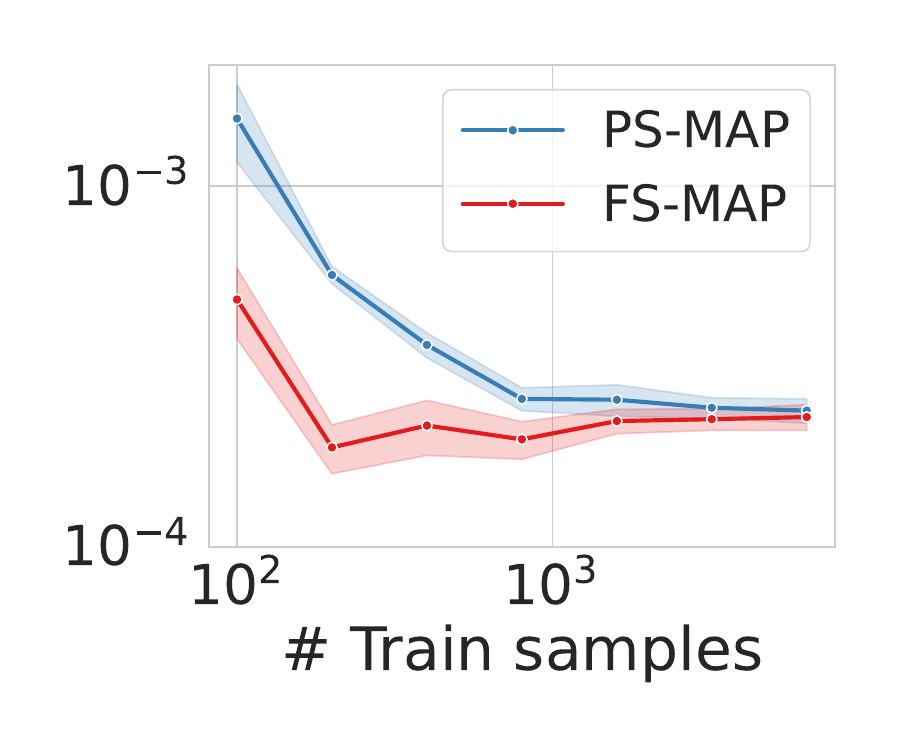}
    \label{fig:hessian_eig}
    }
    \subfloat[Test RMSE]{
    \hspace{-5mm}    
    \includegraphics[width=.26\linewidth, trim={0 30pt 0 0},clip]{./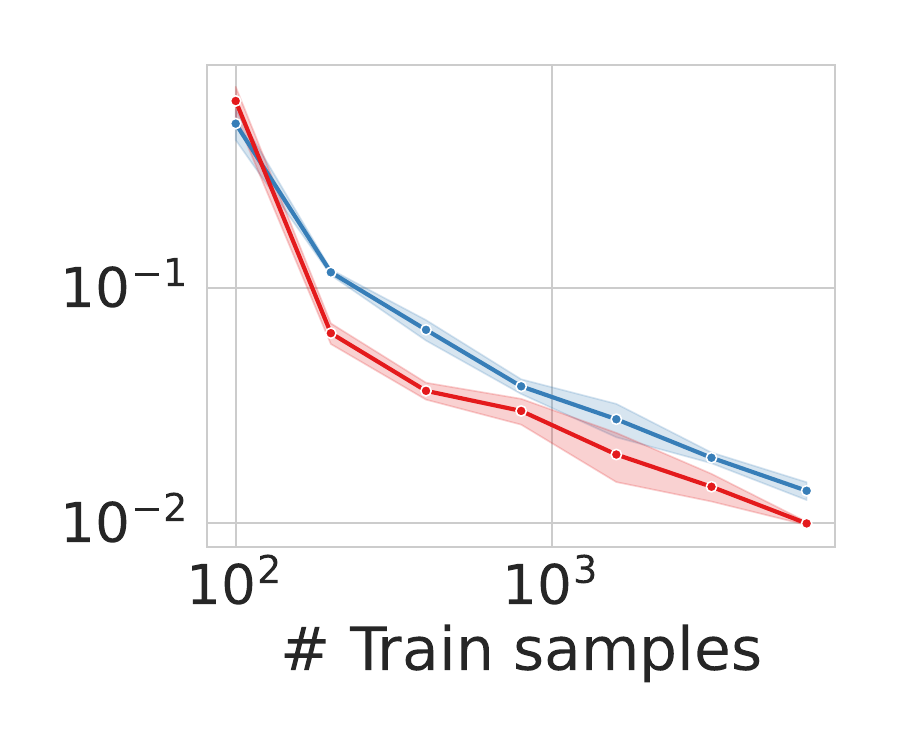}
    \label{fig:test_rmse}
    }
    \subfloat[Train RMSE]{
    \hspace{-5mm}    
    \includegraphics[width=.26\linewidth, trim={0 30pt 0 0},clip]{./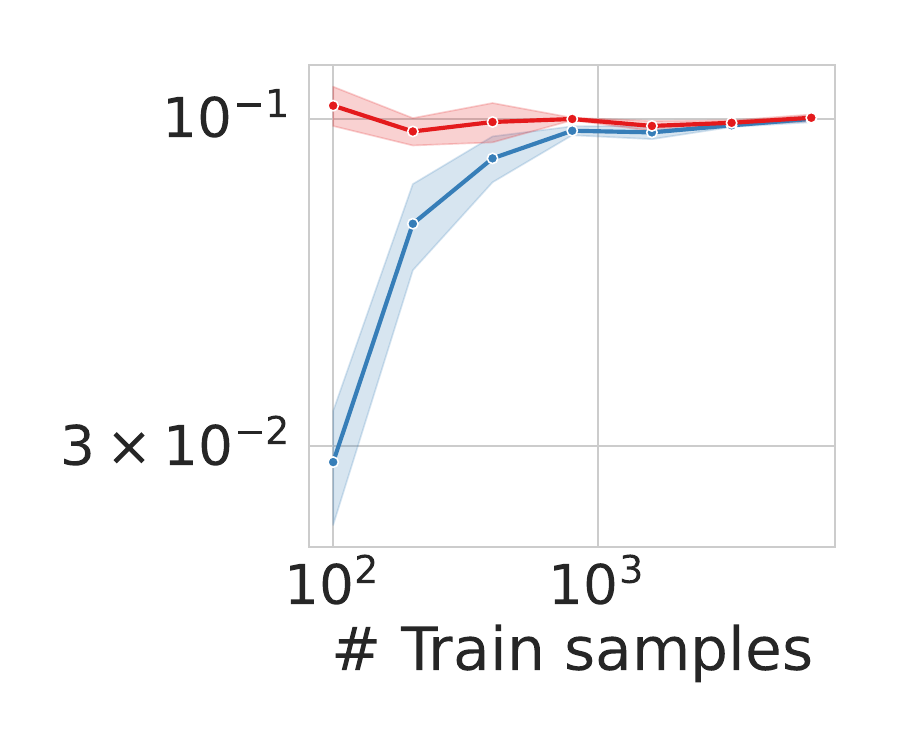}
    \label{fig:train_rmse}
    }
   \caption{\small \textbf{\fsmap exhibits desirable properties.} On a non-linear regression problem,
   \fsmap empirically (a) learns more probable functions, (b) finds flatter minima, (c) improves generalization, and (d) is less prone to overfitting.
   The plot shows means and standard deviations computed from 3 random seeds.
   }
    \label{fig:fourier}
\end{figure}

\textbf{\fsmap Finds More Probable Functions.}\quad
\Cref{fig:log_fs_posterior} shows the improvement in $\log p(f_{\theta} | \D)$ when using \fsmap over \psmap. As expected, \fsmap consistently finds functions with much higher posterior probability ${p(f_{\theta} | \D)}$.

\begin{figure}[t!]
\centering
    \subfloat[Number of evaluation points]{
    \includegraphics[width=.3\linewidth, trim={0 20pt 0 0},clip]{./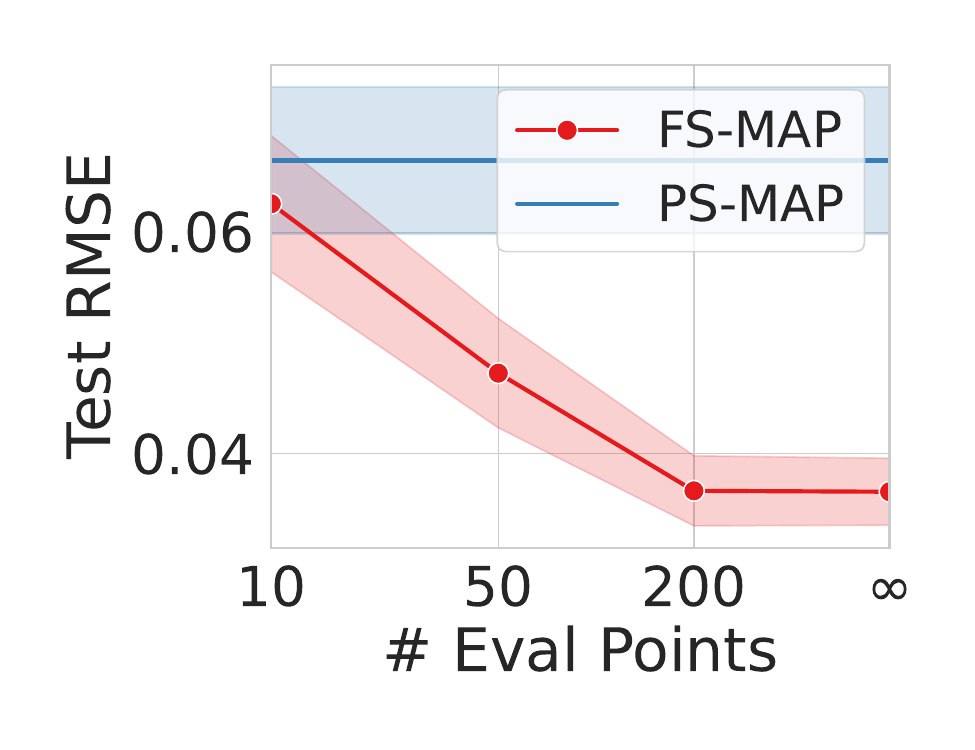}
    \label{fig:fourier_num_points}
    }
    \subfloat[Evaluation distribution]{
    \includegraphics[width=.3\linewidth, trim={0 20pt 0 0},clip]{./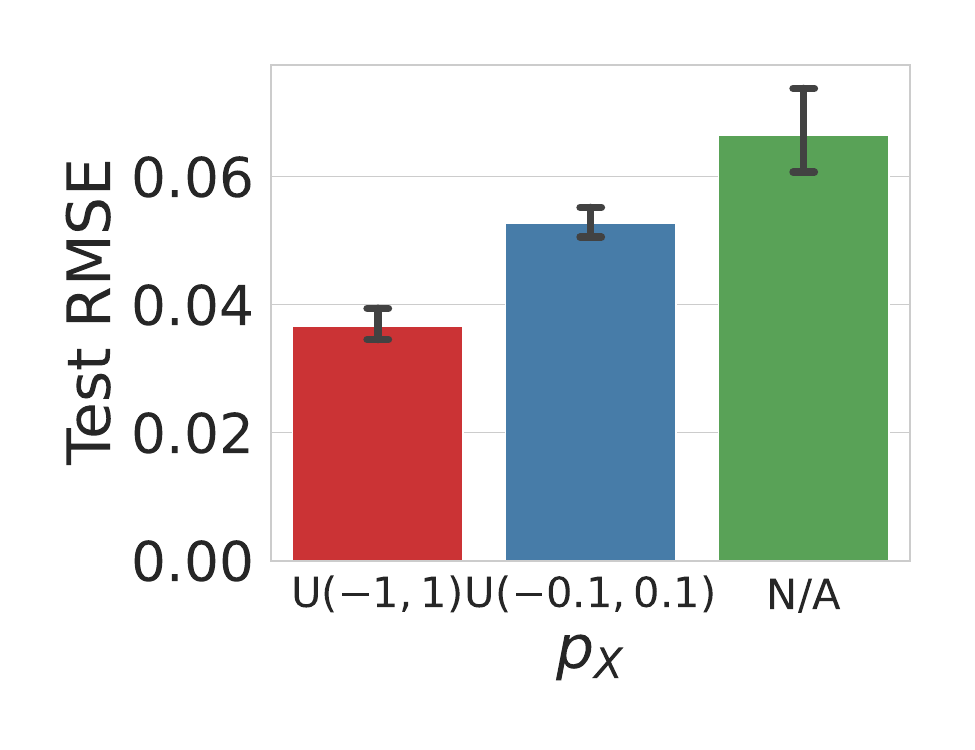}
    \label{fig:fourier_pX}
    }
   \caption{\small \textbf{Effect of important hyperparameters in \fsmap.}
   \fsmap improves with the number of evaluation points and requires carefully specifying the evaluation distribution and the probabilistic model.
   }
    \label{fig:fourier-ablation}
\end{figure}

\textbf{\fsmap Prefers Flat Minima.}\quad
In~\Cref{fig:hessian_eig}, we compare the curvature of the minima found by \fsmap and \psmap, as measured by the average eigenvalue of the Hessian of the mean squared error loss, showing that \fsmap consistently finds flatter minima.
As the objective of \fsmap favors small singular values for the Jacobian $\jac_{\theta}$, when multiple functions can fit the data, the \fsmap objective will generically prefer functions that are less sensitive to perturbations of the parameters, leading to flatter minima. 
To make this connection more precise,
consider the Hessian $\nabla^2 \calL(\theta)$.
If the model fits the training data well, we can apply the Gauss-Newton approximation: $\nabla_{\theta}^2 \calL(\theta) \approx \frac{1}{|\D|} \, \jac_{\theta}(x_\D)^\top \jac_{\theta}(x_\D)$,
which is identical to $\mathcal{J}(\theta; \pX)$ if $\pX$ is chosen to be the empirical distribution of the training inputs.
More generally, a distribution $\pX$ with high density over likely inputs will assign high density to the training inputs, and hence minimizing $\mathcal{J}(\theta; \pX)$ will similarly reduces the magnitude of $\jac_{\theta}(x_\D)$.
Therefore, the \fsmap objective explicitly encourages finding flatter minima, which have been found to correlate with generalization \citep{Hochreiter1997FlatM,li2018visualizing,journals/corr/abs-2003-02139}, robustness to data  perturbations and noisy activations \citep{Chaudhari2016EntropySGDBG,keskar2016large} for neural networks.

\textbf{\fsmap can Achieve Better Generalization.}\quad
\Cref{fig:test_rmse} shows that \fsmap achieves lower test RMSE across a wide range of sample sizes. It's worth noting that this synthetic example satisfies two important criteria such that \fsmap is likely to improve generalization over \psmap. First, the condition outlined in \Cref{sec:fsmap_limitations} for a well-behaved \fsmap objective is met, namely that the set of partial derivatives $\{\partial_{\theta_i} f^{j}_{\theta}(\cdot)\}_{i=1}^{P}$ are linearly independent. Specifically, the partial derivatives are given by $\{\sech(\theta_i) \sin(k_i \cdot ), \sech(\theta_i) \cos(k_i \cdot )\}_{i=1}^{P}.$ Since $\sech$ is non-zero everywhere, the linear independence follows from the linear independence of the Fourier basis.
As a result, the function space prior has no singularities and \fsmap is thus able to learn from data.
The second important criterion is the well-specification of the probabilistic model, which we have defined to precisely match the true data-generating process. Therefore, \fsmap seeks the most likely function according to its \textit{true} probability, without incorrect modeling assumptions.

\textbf{\fsmap is Less Prone to Overfitting.}\quad
As shown in \Cref{fig:train_rmse}, \fsmap tends to have a higher train RMSE than \psmap as a result of the additional log determinant regularization. While \psmap achieves near-zero train RMSE with a small number of samples by overfitting to the noise, \fsmap's train RMSE is consistently around $\sigma^*=0.1,$ the true standard deviation of the observation noise.

\textbf{Performance Improves with Number of Evaluation Points.}\quad
We compare \fsmap estimation with $\pX = \mathrm{Uniform}(-1,1)$ and with a finite number of equidistant evaluation points in $[-1,1]$, where the former
corresponds to the latter with infinitely many points.
In \Cref{fig:fourier_num_points}, we show that the test RMSE of the \fsmap estimate (evaluated at 400 training samples) decreases monotonically until the number of evaluation points reaches 200.
The more evaluation points, the more the finite-point \fsmap objective approximates its infinite limit and the better it captures the behavior of the function.
Indeed, 200 points is the minimum sampling rate required such that the Nyquist frequency reaches the maximum frequency in the Fourier basis, explaining the saturation of the performance.

\textbf{Choice of Evaluation Distribution is Important.}\quad
In \Cref{fig:fourier_pX}, we compare test RMSE for 400 training samples for the default choice of $\pX = \mathrm{Uniform}(-1,1)$, $\pX = \mathrm{Uniform}(-0.1,0.1)$---a distribution that does not reflect inputs at test time---and \psmap ($\pX = \mathrm{N/A}$) for reference.
The result shows that specifying the evaluation distribution to correctly reflect the distribution of inputs at test time is required for \fsmap to achieve optimal performance in this case.

\clearpage

\section{Limitations and Practical Considerations}
\label{sec:fsmap_limitations}
\vspace*{-3pt}

We now discuss the limitations of function-space \map estimation and propose partial remedies, including a scalable approximation that improves its applicability to modern deep learning.

\vspace*{-3pt}
\subsection{\fsmap Does not Necessarily Generalize Better than \psmap}
\vspace*{-3pt}

\begin{wrapfigure}{r}{.3\linewidth}
\vspace*{-20pt}
    \includegraphics[width=\linewidth, trim={0 20pt 0 0},clip]{./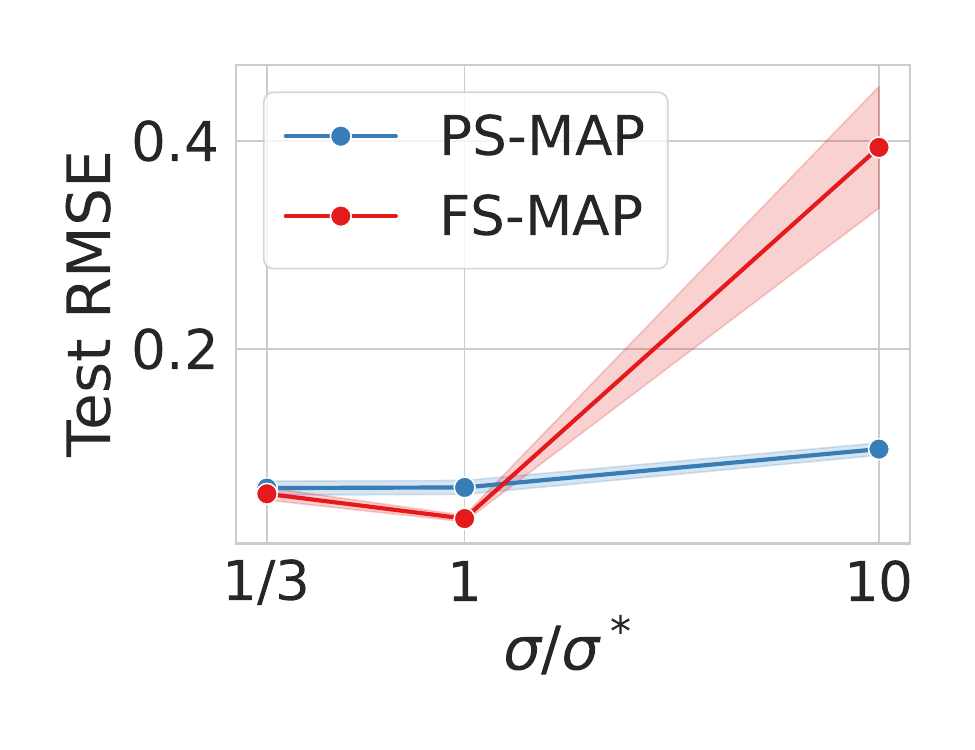}
    \vspace*{-15pt}
   \caption{\small 
   \fsmap can underperform \psmap with a misspecified probabilistic model.
   }
    \label{fig:fourier_noise}
    \vspace*{-10pt}
\end{wrapfigure}

\textit{There is no guarantee that the most likely function is the one that generalizes the best,} especially since our prior can be arbitrary.  For example, \fsmap can under-fit the data if the log determinant introduces excessive regularization, which can happen if our probabilistic model is misspecified by overestimating the observation noise.
Returning to the setup in \Cref{sec:fsmap_props}, in \Cref{fig:fourier_noise}, we find that \fsmap (with 400 training samples) is sensitive to the observation noise scale $\sigma$ in the likelihood.
As $\sigma$ deviates from the true noise scale $\sigma^*$, the test RMSE of \fsmap can change dramatically.
At $\sigma/\sigma^* = 1/3,$ the likelihood dominates in both the \fsmap and \psmap objectives, resulting in similar test RMSEs.
At $\sigma/\sigma^* = 10,$ the likelihood is overpowered by the log determinant regularization in \fsmap and causes the model to under-fit the data and achieve a high test error.
By contrast, the performance of the \psmap estimate is relatively stable. Finally, even if our prior exactly describes the data-generating process, the function that best generalizes won't necessarily be the most likely under the posterior. 

\vspace*{-3pt}
\subsection{Pathological Solutions}
\vspace*{-3pt}
\label{sec:pathology_solution}

In order for function-space \map estimation to be useful in practice, the prior ``density'' $\smash{{p(f_{\theta}) = p(\theta) \det^{-\frac{1}{2}}(\mathcal{J}(\theta; \pX))}}$ must not be infinite for any allowed values of $\theta$, since if it were, those values would constitute global optima of the objective function independent of the actual data.
To ensure that there are no such pathological solutions, we require the matrix $\mathcal{J}(\theta; \pX)$ to be non-singular for any allowed $\theta$. We present two results that help determine if this requirement is met.

\vspace*{3pt}
\begin{theorem}
Assuming $p_X$ and $\jac_\theta = \partial_\theta f_\theta$ are continuous, $\mathcal{J}(\theta; \pX)$ is non-singular if and only if the partial derivatives $\{\partial_{\theta_i} f_{\theta}(\cdot)\}_{i=1}^{P}$ are linearly independent functions over the support of $p_X$.
\end{theorem}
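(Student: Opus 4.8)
The plan is to recognize $\mathcal{J}(\theta; \pX)$ as the Gram matrix of the partial derivatives and reduce the claim to the classical fact that a Gramian is non-singular exactly when its generating elements are linearly independent. Writing $g_i \defines \partial_{\theta_i} f_\theta$, the univariate assumption ($K=1$) makes $\jac_\theta(x)$ a row vector, so from \Cref{eq:continous-fs-map-J} the entries are $\mathcal{J}_{ij}(\theta; \pX) = \E{\pX}{g_i(X) g_j(X)}$. This is precisely the Gramian of $\{g_i\}_{i=1}^P$ under the bilinear form $\langle h, h' \rangle \defines \E{\pX}{h(X) h'(X)} = \int_{\calX} p_X(x) h(x) h'(x) \dx$, which is a genuine inner product on continuous functions restricted to $\mathrm{supp}(p_X)$.

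The key identity driving both directions is that, for any $c \in \R^P$,
\[
c^\top \mathcal{J}(\theta; \pX)\, c = \E{\pX}{\Big( \sum\nolimits_{i=1}^P c_i\, g_i(X) \Big)^2} \ge 0,
\]
so $\mathcal{J}$ is symmetric positive semidefinite and is non-singular iff this quadratic form is strictly positive for every nonzero $c$ (a positive semidefinite matrix $M$ with $c^\top M c = 0$ necessarily has $Mc = 0$). For the ($\Leftarrow$) direction I would argue by contradiction: if $\mathcal{J}$ is singular there is a nonzero $c$ with $c^\top \mathcal{J} c = 0$, i.e. $\int_\calX p_X(x)\big(\sum_i c_i g_i(x)\big)^2 \dx = 0$; the continuity argument below then forces $\sum_i c_i g_i \equiv 0$ on $\mathrm{supp}(p_X)$, contradicting linear independence. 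The ($\Rightarrow$) direction is the contrapositive: a nontrivial relation $\sum_i c_i g_i = 0$ on the support makes $c^\top \mathcal{J} c = 0$, hence $\mathcal{J} c = 0$ with $c \neq 0$, so $\mathcal{J}$ is singular.

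The step I expect to be the main obstacle — and the only place the continuity hypotheses are genuinely used — is showing that $\int_{\calX} p_X(x)\, h(x)^2 \dx = 0$ implies $h \equiv 0$ on $\mathrm{supp}(p_X)$, where $h = \sum_i c_i g_i$ is continuous. The vanishing integral only yields $p_X h^2 = 0$ Lebesgue-almost everywhere, and the subtlety is upgrading this to pointwise vanishing on the entire support. I would handle it by observing that on the open set $\{p_X > 0\}$ the continuous non-negative integrand $p_X h^2$ must vanish identically (otherwise it is positive on a neighborhood, contributing a strictly positive integral), so $h = 0$ there, and then continuity of $h$ extends this to the closure $\mathrm{supp}(p_X)$. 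This is exactly what excludes the degenerate scenario of a nonzero continuous combination vanishing $p_X$-almost everywhere without being identically zero on the support, and it explains why continuity of both $p_X$ and $\jac_\theta$ is assumed rather than mere integrability.
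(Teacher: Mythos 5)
Your proposal is correct and follows essentially the same route as the paper: both directions hinge on the identity $v^\top \mathcal{J}(\theta;\pX) v = \int_\calX p_X(x)\bigl(\sum_i v_i \partial_{\theta_i} f_\theta(x)\bigr)^2\,\dx$ together with the continuity of the non-negative integrand to pass between a vanishing integral and pointwise vanishing on $\mathrm{supp}(p_X)$. Your treatment of the closure step (vanishing on the open set $\{p_X>0\}$ and extending to its closure by continuity of $h$) is in fact slightly more explicit than the paper's one-line version of the same argument.
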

\begin{proof}
\vspace*{-10pt}
    See \Cref{appsec:finite-density-condition}
\end{proof}

\begin{theorem}
If there exists a non-trivial symmetry $S \in \R^{P \times P}$ with $S \neq I$ such that $f_\theta = f_{S\theta}$ for all $\theta$, then  $\mathcal{J}(\theta^*; \pX)$ is singular for all fixed points $\theta^*$ where $S\theta^* = \theta^*$.
\end{theorem}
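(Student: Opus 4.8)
The plan is to exploit the reparametrization symmetry directly by differentiating the identity $f_\theta = f_{S\theta}$ with respect to $\theta$, which converts a statement about functions into a linear-algebraic constraint on the Jacobian $\jac_\theta$. This constraint, evaluated at a fixed point $\theta^*$, will exhibit an explicit nonzero vector annihilated by $\jac_{\theta^*}(x)$ at every $x$, and since $\mathcal{J}(\theta^*; \pX)$ is built from $\jac_{\theta^*}$ it inherits the same null vector and is therefore singular.

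Concretely, first I would fix an arbitrary input $x$ and view $f_\theta(x)$ as a map $\R^P \to \R^K$. The hypothesis $f_\theta(x) = f_{S\theta}(x)$ holds for all $\theta$, so I can differentiate both sides in $\theta$. The left-hand side yields the $K$-by-$P$ Jacobian $\jac_\theta(x)$, while the right-hand side, by the chain rule applied to the linear inner map $\theta \mapsto S\theta$, yields $\jac_{S\theta}(x)\,S$. This gives the key identity $\jac_\theta(x) = \jac_{S\theta}(x)\,S$, valid for every $\theta$ and every $x$.

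Next I would specialize to a fixed point $\theta^*$ satisfying $S\theta^* = \theta^*$. Substituting $\theta = \theta^*$ collapses $\jac_{S\theta^*}$ to $\jac_{\theta^*}$, so the identity becomes $\jac_{\theta^*}(x) = \jac_{\theta^*}(x)\,S$, i.e.
\begin{equation}
    \jac_{\theta^*}(x)\,(I - S) = 0 \quad \text{for all } x .
\end{equation}
Because $S \neq I$, the fixed matrix $I - S$ is nonzero and hence has at least one nonzero column; calling such a column $v = (I - S)e_k \neq 0$, I obtain a single fixed vector $v \in \R^P$, independent of $x$, with $\jac_{\theta^*}(x)\,v = 0$ for every $x$ in the support of $p_X$. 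Finally, applying $\mathcal{J}(\theta^*;\pX) = \E{\pX}{\jac_{\theta^*}(X)^\top \jac_{\theta^*}(X)}$ to $v$ and pulling $v$ inside the expectation gives $\mathcal{J}(\theta^*;\pX)\,v = \E{\pX}{\jac_{\theta^*}(X)^\top (\jac_{\theta^*}(X) v)} = 0$, so $v$ is a nonzero kernel vector and $\mathcal{J}(\theta^*;\pX)$ is singular.

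The proof is short, so there is no single hard computation; the one step deserving care is the extraction of an $x$-independent null vector. It is tempting to argue only about singularity of $\jac_{\theta^*}(x)$ pointwise, but what actually matters is that $I - S$ is a constant matrix, so the same column $v$ works simultaneously for all $x$ and therefore survives the expectation defining $\mathcal{J}$. I would also note that the chain-rule step implicitly uses differentiability of $f_\theta$ in $\theta$ (already granted by the existence of $\jac_\theta$), and that no invertibility of $S$ is needed---only $S \neq I$.
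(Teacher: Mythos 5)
Your proof is correct and follows essentially the same route as the paper's: differentiate the symmetry identity to get $\jac_{\theta^*}(I-S)=0$ at the fixed point, extract a nonzero null vector of $\jac_{\theta^*}$ from a nonzero column of $I-S$, and push it through the expectation defining $\mathcal{J}(\theta^*;\pX)$. Your version is slightly more explicit about the null vector being independent of $x$, which is the one point the paper leaves implicit, but the argument is the same.
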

\begin{proof}
\vspace*{-10pt}
    See \Cref{appsec:symmetry}
\end{proof}
\vspace*{-5pt}

Theorem 1 is analogous to $A^\top A$ having the same null space as $A$ for any matrix $A$. It suggests that to avoid pathological optima the effect of small changes to each parameter must be linearly independent. Theorem 2 builds on this observation to show that if the model exhibits symmetries in its parameterization, \fsmap will necessarily have pathological optima. Since most neural networks at least possess permutation symmetries of the hidden units, we show that these pathological \fsmap solutions are almost universally present, generalizing the specific cases observed by \citet{wolpert1993fsmap}.

\paragraph{A Simple Remedy to Remove Singularities.} Instead of finding a point approximation of the function space posterior, we can perform variational inference under a variational family $\{q(\cdot|\theta)\}_\theta,$ where $q(\cdot|\theta)$ is localized around $\theta$ with a small and constant \textit{function-space} entropy $h$. Ignoring constants and $\order{h}$ terms, the variational lower bound is given by
\begin{align}
    \mathcal{L}_\mathrm{VLB}(\theta; \pX) = \sum\nolimits_{i=1}^N \log{p(y^{(i)}_{\calD} | x^{(i)}_{\calD} , f_{\theta}}) + \log{p(\theta)} - \frac{1}{2} \mathbb{E}_{q(\theta' | \theta)} \left[ \log\det \mathcal{J}(\theta'; \pX) \right].
\label{eq:variational-fs-map}
\end{align}

Similar to how convolving an image with a localized Gaussian filter removes high-frequency components, the expectation $\mathbb{E}_{q(\theta' | \theta)}$ removes potential singularities in $ \log\det \mathcal{J}(\theta'; \pX).$ This effect can be approximated by simply adding a small diagonal jitter $\epsilon$ to $\mathcal{J}(\theta; \pX):$
\begin{align}
    \hat{\mathcal{L}}(\theta; \pX) \defines \sum\nolimits_{i=1}^N \log{p(y^{(i)}_{\calD} | x^{(i)}_{\calD} , f_{\theta}}) + \log{p(\theta)} - \frac{1}{2} \log\det \qty(\mathcal{J}(\theta; \pX) + \epsilon I).
\label{eq:jitter-fs-map}
\end{align}
Alternatively, we know $\smash{\mathbb{E}_{q(\theta' | \theta)} \left[ \log\det \mathcal{J}(\theta'; \pX) \right]}$ must be finite because the variational lower bound cannot exceed the log marginal likelihood.
$\smash{\hat{\mathcal{L}}(\theta; \pX)}$ can be used as a minimal modification of \fsmap that eliminates pathological optima. We provide full derivation for these results in \Cref{sec:variational_remedy}.

\vspace*{-2pt}
\subsection{Does the Function-Space \map Better Approximate the Bayesian Model Average?}
\vspace*{-2pt}

The Bayesian model average (BMA), $f_\mathrm{BMA}$, of the function $f_\theta$ is given by the posterior mean, that is:
\begin{align*}
    f_\mathrm{BMA}(\cdot)
    \defines
    \E{p(\theta|\D)}{f_\theta(\cdot)} = \E{p(f_\theta|\D)}{f_\theta(\cdot)}.
\end{align*}
This expectation is the same when computed in parameter or function space, and has theoretically desirable generalization properties \citep{murphy2013probabilistic,wilson2020bayesian}.
Both \psmap and \fsmap provide point approximations of $f_\mathrm{BMA}$.
However, \fsmap seeks the mode of the distribution $p(f_\theta|\D)$, the mean of which we aim to compute.
By contrast \psmap finds the mode of a distinct distribution, $p(\theta|\D)$, which can markedly diverge from $p(f_\theta|\D)$, depending on the parameterization.
Consequently, it is reasonable to anticipate that the \fsmap objective generally encourages finding a superior estimate of the BMA.

\begin{figure}[!b]
\centering
\vspace*{-25pt}
    \subfloat[$\alpha = 1.2$]{
    \includegraphics[width=.32\linewidth, trim={0 30pt 0 0},clip]{./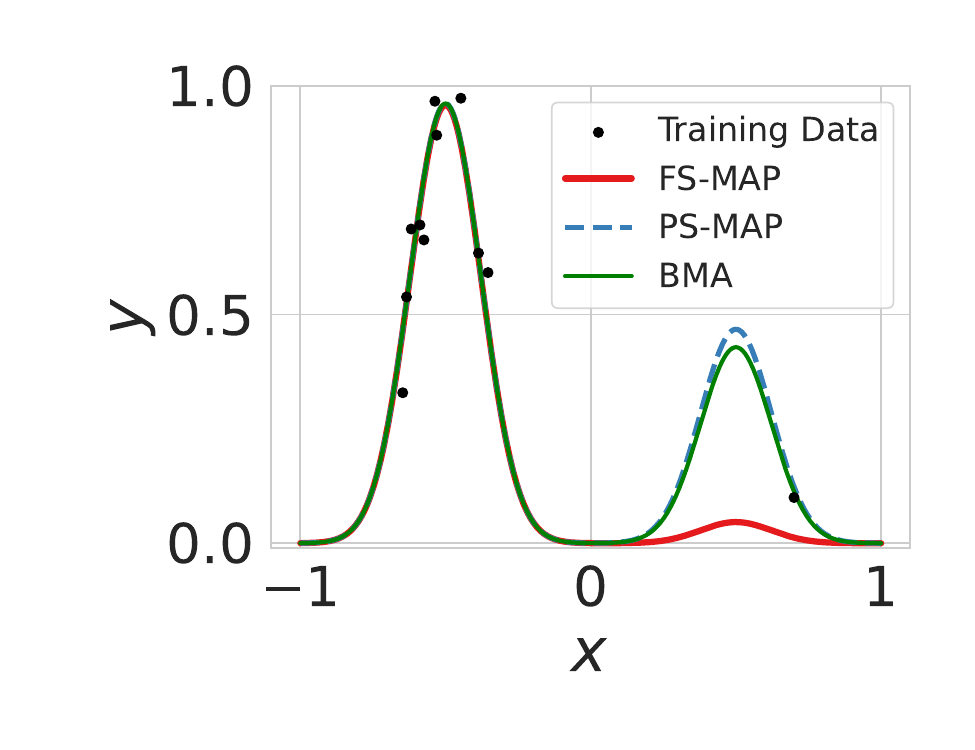}
    }
    \hspace*{-10pt}
    \subfloat[$\alpha = 10$]{
    \includegraphics[width=.33\linewidth, trim={0 30pt 0 0},clip]{./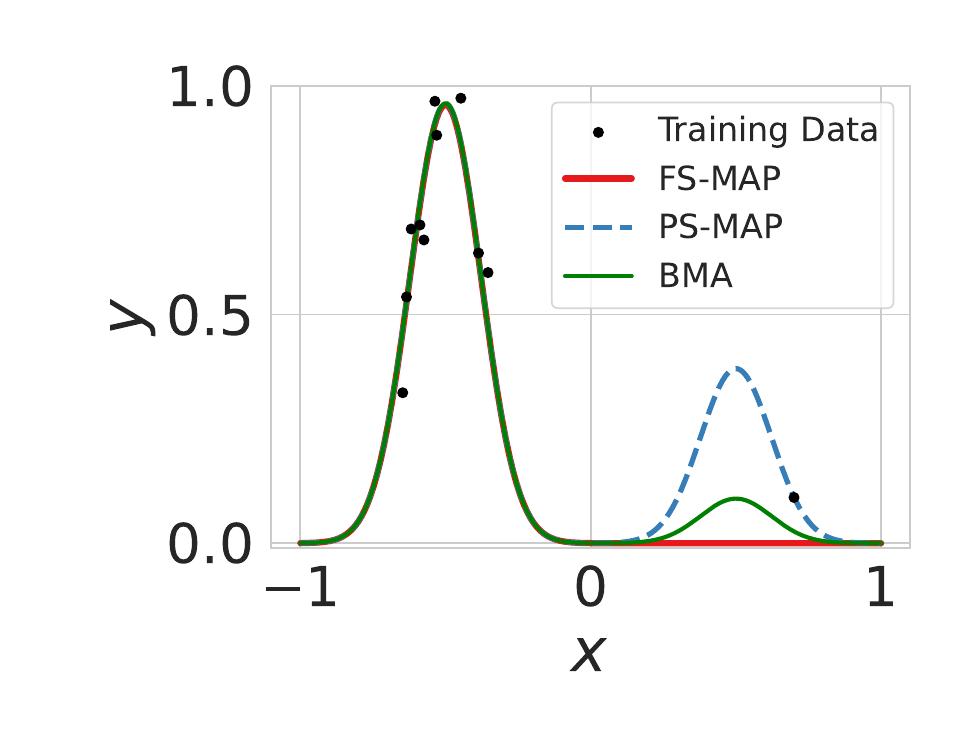}
    }
    \hspace*{-10pt}
    \subfloat[$\alpha = 20$]{
    \includegraphics[width=.33\linewidth, trim={0 30pt 0 0},clip]{./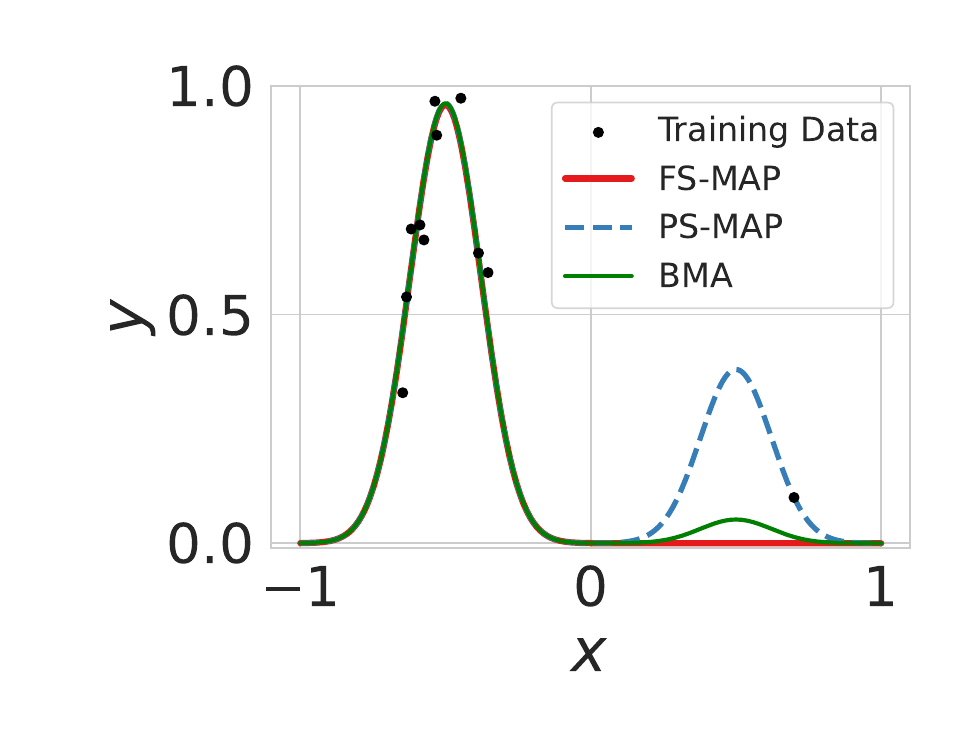}
    }
   \caption{\small \fsmap does not necessarily approximate the BMA better than \psmap. A Gaussian mixture regression model,
   where the right Gaussian has weight  $\exp(\theta_R)$ with prior $p(\theta_R) = \mathcal{N}(0,\alpha^2)$. As we increase $\alpha$, we interpolate between $\psmap$ and $\fsmap$ approximating the BMA.
   }
    \label{fig:bma}
    \vspace*{-5pt}
\end{figure}

Consider the large data regime where the posterior in both parameter and function space follows a Gaussian distribution, in line with the Bernstein-von Mises theorem \citep{van2000asymptotic}. In Gaussian distributions, the mode and mean coincide, and therefore $\smash{f_{\hat{\theta}^{\fsmap}} = f_\mathrm{BMA}}$.
However, even in this setting, where $\smash{\hat{\theta}^{\psmap} = \mathbb{E}_{p(\theta|\D)}[\theta]}$, generally $\smash{f_{\hat{\theta}^{\psmap}} \neq f_\mathrm{BMA}}$, because the expectation does not distribute across a function $f$ that is non-linear in its parameters $\theta$: $f_{\mathbb{E}_{p(\theta|\D)}[\theta]}(\cdot) \neq \E{p(\theta|\D)}{f_\theta(\cdot)}$.

However, we find that whether \fsmap better approximates the BMA depends strongly on the problem setting. Returning to the setup in \Cref{fig:topfig}, where we have a Gaussian mixture regression model,
we compare the BMA function with functions learned by \fsmap and \psmap under the prior $p(\theta) = \N(0, \alpha^2)$ with several settings of $\alpha.$
We observe in \Cref{fig:bma} that \fsmap only approximates the BMA function better than \psmap at larger $\alpha$ values.
To understand this behavior, recall the height $h_R$ for the right Gaussian bump is given by $\exp(\theta_R),$ which has a $\mathrm{lognormal}(0, \alpha^2)$ prior. As we increase $\alpha,$ more prior mass is assigned to $h_R$ with near-zero value and therefore to functions with small values within $x \in [0, 1]$.
While the lognormal distribution also has a heavy tail at large $h_R$, the likelihood constrains the posterior $p(h_R|\D)$ to only place high mass for small $h_R.$ These two effects combine to make the posterior increasingly concentrated in function space around functions described by $h_L \approx 1$ and $h_R \approx 0,$ implying that the mode in function space should better approximate its mean.
In contrast, as we decrease $\alpha,$ both the prior and posterior become more concentrated in parameter space since the parameter prior $p(\theta) = \N(0, \alpha^2)$ approaches the delta function at zero, suggesting that \psmap should be a good approximation to the BMA function. By varying $\alpha$, we can interpolate between how well $\psmap$ and $\fsmap$ approximate the BMA.

\vspace*{-2pt}
\subsection{Scalable Approximation for Large Neural Networks}
\vspace*{-2pt}

In practice, the expectation $\mathcal{J}(\theta; \pX) = \mathbb{E}_{\pX}[\jac_{\theta}(X)^\top \jac_{\theta}(X)]$ is almost never analytically tractable due to the integral over $X$. We show in \Cref{appsec:log-det-approx} that a simple Monte Carlo estimate for $\mathcal{J}(\theta; \pX)$ with $S$ samples of $X$ can yield decent accuracy. For large neural networks, this estimator is still prohibitively expensive: each sample will require $K$ backward passes, taking a total of $\order{SKP}$ time. In addition, computing the resulting $SK$-by$P$ determinant takes time $\mathcal{O}(SKP^2)$ (assuming $P \geq SK$). However, we can remedy the challenge of scalability by further leaning into the variational objective described in \Cref{eq:variational-fs-map} and consider the regime where $\epsilon \gg \lambda_i$ for all eigenvalues $\lambda_i$ of $\mathcal{J}(\theta; \pX))$. To first order in $\max_i \lambda_i / \epsilon,$ we have $\log \det(\mathcal{J}(\theta; \pX)) + \epsilon I) = \frac{1}{2}\Delta_\psi d(\theta, \theta + \psi)) \big|_{\psi=0}$
where $\smash{d(\theta, \theta^\prime) \defines \mathbb{E}_{\pX}[\norm{f_\theta(X) - f_{\theta^\prime}(X)}^2]}$ and $\smash{\Delta = \sum_{i=1}^{P} \partial^2_{\theta_i}}$ denotes the Laplacian operator. Exploiting the identity $\frac{1}{2}\Delta_\psi d(\theta, \theta + \psi)) \big|_{\psi=0} = \frac{1}{\beta^2} \E{\psi\sim\N(0, \beta^2I)}{d(\theta, \theta+\psi)} + \order{\beta^2}$ and choosing $\beta$ small enough, we obtain an accurate Monte Carlo estimator for the Laplacian using only forward passes. The resulting objective which we refer to as Laplacian Regularized \map (\lmap) is given by
\begin{align}
    \mathcal{L}_{\lmap}(\theta; \pX)
    \defines
    \sum\nolimits_{i=1}^N \log{p(y^{(i)}_{\calD} | x^{(i)}_{\calD} , f_{\theta}}) + \log{p(\theta)} - \frac{1}{2 \epsilon \beta^2}  \E{\psi \sim \N(0, \beta^2I)}{d(\theta, \theta+\psi)}
    \label{eq:l-map}.
\end{align}
We use a single sample of $\psi$ and $S$ samples of evaluation points for estimating $d(\theta, \theta+\psi)$ at each step, reducing the overhead from $\order{SKP^2}$ to only $\order{SP}$. A more detailed exposition is available in \Cref{appsec:lmap_approx}.

\vspace*{-3pt}
\subsection{Experimental Evaluation of \lmap}
\label{sec:experiments}
\vspace*{-3pt}

We now evaluate \lmap applied to neural networks on various commonly used datasets. We provide full experimental details and extended results in~\Cref{appsec:further_empirical}. Unlike the synthetic task in Section~\ref{sec:fsmap_props}, in applying neural networks to these more complex datasets we often cannot specify a prior that accurately models the true data-generating process, beyond choosing a plausible architecture whose high-level inductive biases align with the task (e.g. CNNs for images) and a simple prior $p(\theta)$ favoring smooth functions (e.g. an isotropic Gaussian with small variances). Therefore, we have less reason to expect \lmap should outperform \psmap in these settings.

\textbf{UCI Regression.}\quad  In Table~\ref{tab:uci}, we report normalized test RMSE on UCI datasets~\citep{asuncion2007uci}, using an MLP with 3 hidden layers and 256 units. \lmap achieves lower error than \psmap on 7 out 8 datasets. Since the inputs are normalized and low-dimensional, we use $\pX = \N(0, I)$ for \lmap.

\vspace*{-10pt}
\setlength{\tabcolsep}{2pt}
\begin{table}[ht]
\small
\centering
\caption{
    \small
    Normalized test RMSE ($\downarrow$) on UCI datasets. We report mean and standard errors over six trials.
    }
\label{tab:uci}
\vspace*{3pt}
\begin{sc}
\begin{tabular}{l|cccccccc}
\toprule
Method & Boston & Concrete & Energy & Naval & Power & Protein & Winered & Winewhite \\
\midrule
\psmap & $\mathbf{.329\pms{.033}}$ & $.272\pms{.016}$ & $.042\pms{.003}$ & $.032\pms{.005}$ & $.219\pms{.006}$ & $.584\pms{.005}$ & $.851\pms{.029}$ & $.758\pms{.013}$ \\
\lmap & $.352\pms{.040}$ & $\mathbf{.261\pms{.013}}$ & $\mathbf{.041\pms{.002}}$ & $\mathbf{.018\pms{.002}}$ & $\mathbf{.218\pms{.005}}$ & $\mathbf{.580\pms{.005}}$ & $\mathbf{.792\pms{.031}}$ & $\mathbf{.714\pms{.017}}$ \\
\bottomrule
\end{tabular}
\end{sc}
\end{table}

\setlength{\tabcolsep}{1.8pt}
\begin{table}[!b]
\vspace*{-10pt}
\caption{\small We report the accuracy ({\sc acc.}), negative log-likelihood ({\sc nll}), expected calibration error \citep{naeini2015obtaining} ({\sc ece}), and area under selective prediction accuracy curve \citep{elyaniv2010foundations} ({\sc Sel. Pred.}) for FashionMNIST \citep{xiao2017/online} ({$\D'$} = KMNIST \citep{clanuwat2018deep}), and CIFAR-10 \citep{krizhevsky2009learning} ({$\D'$} = CIFAR-100). A ResNet-18 \citep{he2016deep} is used. Std. errors are reported over five trials.}
\vspace*{3pt}
\label{tab:vis_class}
\begin{sc}
\small
\begin{tabular}{l|cccc|cccc}
\toprule
\multirow{ 2}{*}{Method} & \multicolumn{4}{c}{FashionMNIST} & \multicolumn{4}{c}{CIFAR-10} \\
& Acc.$\uparrow$ & Sel. Pred.$\uparrow$ & NLL$\downarrow$ & ECE$\downarrow$ & Acc.$\uparrow$ & Sel. Pred.$\uparrow$ & NLL$\downarrow$ & ECE$\downarrow$ \\
\midrule
\psmap & $93.9\%\pms{.1}$ & $98.6\%\pms{.1}$ & $.26\pms{.01}$ & $4.0\%\pms{.1}$ & $95.4\%\pms{.1}$ & $99.4\%\pms{.0}$ & $.18\pms{.00}$ & $2.5\%\pms{.1}$ \\
\midrule
\lmap $\pX \hspace*{-2pt}=\hspace*{-2pt} \mathcal{N}(0, I)$ &  $94.0\%\pms{.0}$ & $99.2\%\pms{.0}$ & $.25\pms{.01}$ & $4.0\%\pms{.2}$ & $95.3\%\pms{.1}$ & $99.4\%\pms{.0}$ & $.20\pms{.00}$ & $3.0\%\pms{.1}$ \\
\phantom{abcde}\hspace*{-1.5pt} $\pX \hspace*{-2pt}=\hspace*{-2pt}$ {\sc{Train}} & $93.8\%\pms{.1}$ & $99.2\%\pms{.1}$ & $.27\pms{.01}$ & $4.3\%\pms{.2}$ & $95.6\%\pms{.1}$ & $99.5\%\pms{.0}$ & $.18\pms{.01}$ & $2.6\%\pms{.0}$ \\
\phantom{abcde}\hspace*{-1.5pt} $\pX \hspace*{-2pt}=\hspace*{-2pt}$ {$\D'$} & $94.1\%\pms{.1}$ & $99.2\%\pms{.1}$ & $.26\pms{.01}$ & $4.1\%\pms{.1}$ & $95.5\%\pms{.1}$ & $99.5\%\pms{.0}$ & $.16\pms{.01}$ & $1.4\%\pms{.1}$ \\
\bottomrule
\end{tabular}
\end{sc}
\end{table}

\textbf{Image Classification.}\quad 
 In \Cref{tab:vis_class}, we compare \lmap and \psmap on image classification using a ResNet-18 \citep{he2016deep}. \lmap achieves comparable or slightly better accuracies and is often better calibrated. We further test the effectiveness of \lmap with transfer learning with a larger ResNet-50 trained on ImageNet. In \Cref{tab:c10_transfer_r50}, we show \lmap also achieves small improvements in accuracy and calibration in transfer learning.

\setlength{\tabcolsep}{20pt}
\begin{table}[ht]
\small
\centering
\caption{
    Transfer learning from ImageNet with ResNet-50 on CIFAR-10 can lead to slightly better accuracy and improved calibration.
}
\vspace*{3pt}
\label{tab:c10_transfer_r50}
\begin{sc}
\begin{tabular}{l|cccc}
\toprule
Method & Acc. $\uparrow$ & Sel. Pred. $\uparrow$ & NLL $\downarrow$ & ECE $\downarrow$ \\
\midrule
\psmap & $96.3\% \pms{0.1}$ & $99.5\% \pms{0.1}$ & $0.18 \pms{0.01}$ & $2.6\% \pms{0.2}$ \\
\lmap & $96.4\% \pms{0.1}$ & $99.5\% \pms{0.1}$ & $0.15 \pms{0.01}$ & $2.1\% \pms{0.1}$ \\
\bottomrule
\end{tabular}
\end{sc}
\end{table}

\textbf{Loss Landscape.}\quad
In \Cref{fig:loss_landscape} (Left), we show that \lmap indeed finds flatter minima. Further, we plot the Laplacian estimate in \Cref{fig:loss_landscape} (Right) as the training progresses. We see that the Laplacian is much lower for \lmap, showing its effectiveness at constraining the eigenvalues of $\mathcal{J}(\theta; \pX)$.
\begin{figure}[!h]
    \centering
    \vspace*{-15pt}
    \subfloat[FashionMNIST]{
    \includegraphics[width=.23\linewidth]{./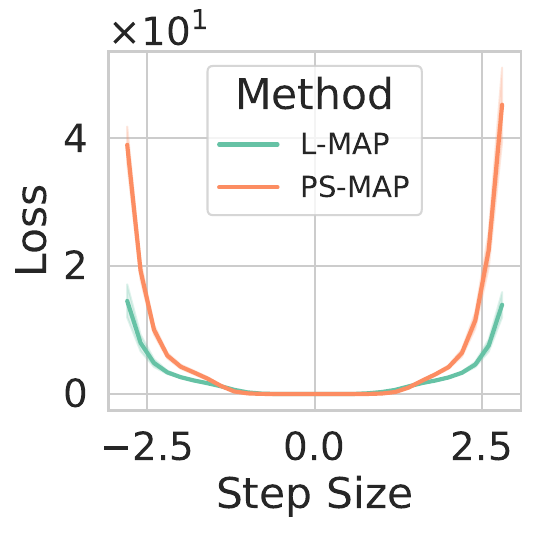}
    \includegraphics[width=.23\linewidth]{./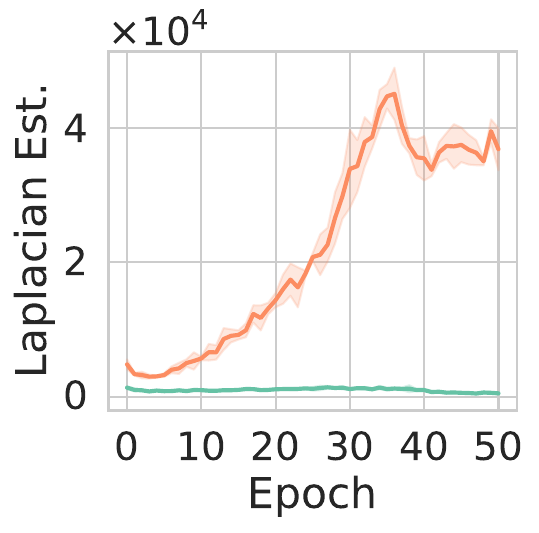}
    }
    \subfloat[CIFAR-10]{
    \includegraphics[width=.23\linewidth]{./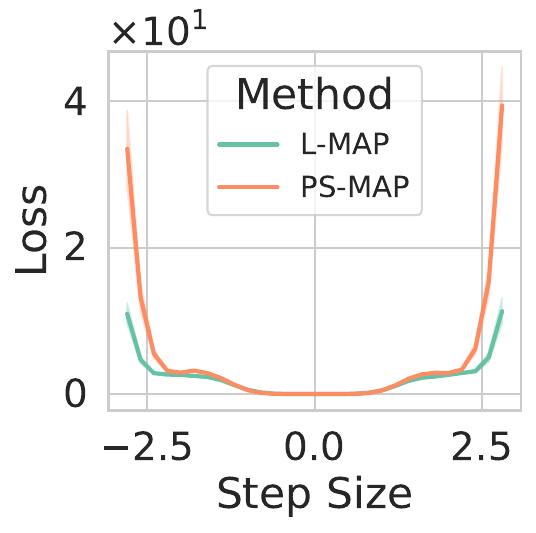}
    \includegraphics[width=.245\linewidth]{./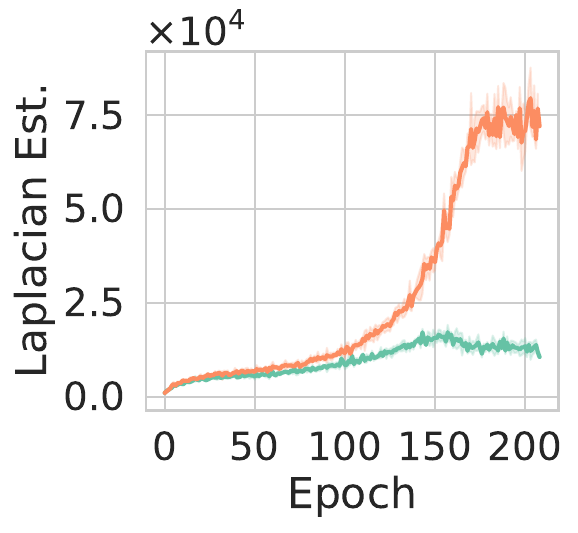}
    }
    \vspace*{-3pt}
    \caption{
    \small \textbf{Empirical evidence that \lmap finds flatter minima.}
    \textbf{(Left)} For various step sizes in 20 randomly sampled directions starting at the minima, we compute the training loss averaged over all directions to summarize the local loss landscape.
    We use filter normalization for landscape visualization \citep{Li2017VisualizingTL}.
    \lmap visibly finds flatter minima.
    \textbf{(Right)} We plot the Laplacian estimate throughout training, showing \lmap is indeed effective at constraining the eigenvalues of $\mathcal{J}(\theta; \pX)$.
    }
    \label{fig:loss_landscape}
    \vspace*{-8pt}
\end{figure}

\textbf{Distribution of Evaluation Points.}\quad
In \Cref{tab:vis_class}, we study the impact of the choice of distribution of evaluation points. Alongside our main choice of the evaluation set (KMNIST for FashionMNIST and CIFAR-100 for CIFAR-10), we use two additional distributions - the training set itself and a white noise $\mathcal{N}(0, I)$ distribution of the same dimensions as the training inputs. For both tasks, we find that using an external evaluation set beyond the empirical training distribution is often beneficial.

\textbf{Number of Evaluation Point Samples.}\quad 
In \Cref{fig:ece_lmap_lnoise}(a), we compare different Monte Carlo sample sizes $S$ for estimating the Laplacian. Overall, \lmap is not sensitive to this choice in terms of accuracy. However, calibration error \citep{naeini2015obtaining} sometimes monotonically decreases with $S$.

\begin{wrapfigure}{r}{.48\linewidth}
    \centering
    \vspace*{-5pt}
    \includegraphics[width=.4\linewidth]{./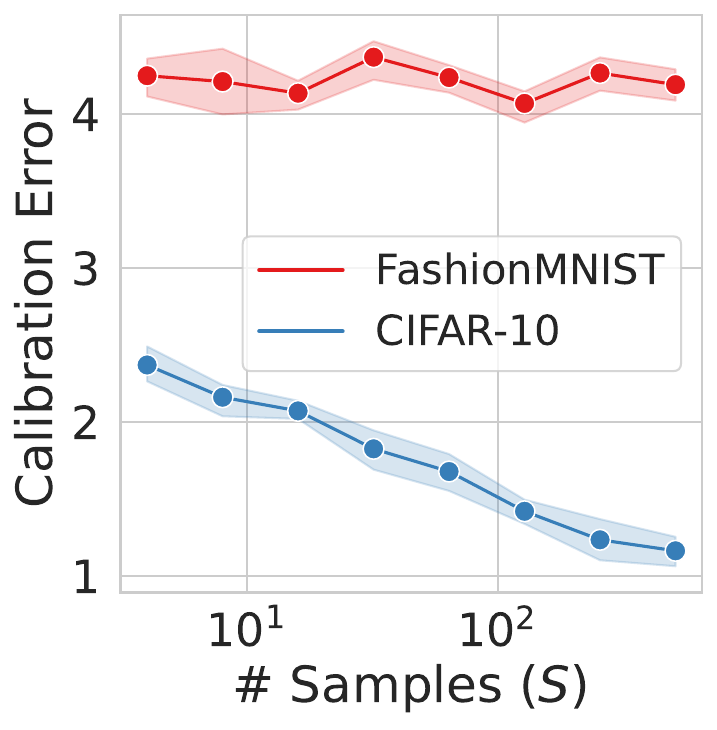}
    \includegraphics[width=.56\linewidth]{./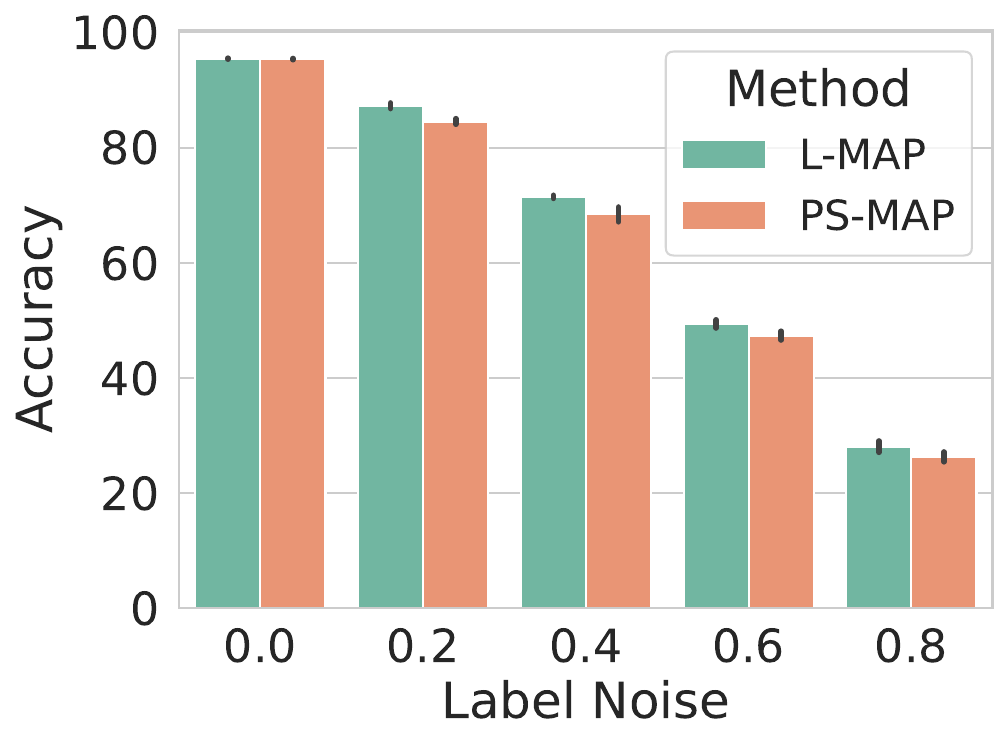}
    \caption{\small \textbf{(Left)}
    Calibration error can be reduced by increasing the number of Monte Carlo samples for the Laplacian estimator in \Cref{eq:l-map}.
    \textbf{(Right)} \lmap is slightly more robust to training with label noise.
    For each level of noise, we replace a fraction of labels with uniformly random labels.}
    \label{fig:ece_lmap_lnoise}
    \vspace*{-2em}
\end{wrapfigure}

\textbf{Robustness to Label Noise.}\quad
In \Cref{fig:ece_lmap_lnoise}(b), we find \lmap is slightly more robust to label noise than \psmap on CIFAR-10.

\textbf{Main Takeaways.}\quad
\lmap shows qualitatively similar properties as \fsmap such as favoring flat minima and often provides better calibration. However, it achieves comparable or only slightly better accuracy on more complex image classification tasks. In line with our expectations, these results suggest that accounting for the precise difference between \fsmap and \psmap is less useful without a sufficiently well-motivated prior.

\vspace*{-2pt}
\section{Discussion}
\label{sec:discussion}
\vspace*{-2pt}

While we typically train our models through \psmap, we show that \fsmap has many appealing properties in addition to re-parametrization invariance. We empirically verify these properties, including the potential for better robustness to noise and improved calibration. But we also reveal a more nuanced and unexpected set of pros and cons for each approach. For example, while it is natural to assume that \fsmap more closely approximates a Bayesian model average, we clearly demonstrate how there can be a significant discrepancy. Moreover, while \psmap is not invariant to re-parametrization, which can be seen as a fundamental pathology, we show \fsmap has its own failure modes such as pathological optima, as well as practical challenges around scalability. In general, our results suggest the benefits of \fsmap will be greatest when the prior is sufficiently well-motivated.

Our findings engage with and contribute to active discussions across the literature.
For example, several works have argued conceptually---and found empirically---that solutions in \textit{flatter} regions of the loss landscape correspond to better generalization \citep{foret2020sharpness,Hochreiter1997FlatM, izmailov2018averaging,keskar2016large,rudner2023fseb}.
On the other hand, \citet{dinh2017sharp} argue that the ways we measure flatness, for example, through Hessian eigenvalues, are not parameterization invariant, making it possible to construct striking failure modes.
Similarly, \psmap estimation is not parameterization invariant. However, our analysis and comparison to \fsmap estimation raise the question to what extent lack of parametrization invariance is actually a significant practical shortcoming---after all, we are not reparametrizing our models on the fly, and we have evolved our models and training techniques conditioned on standard parameterizations.

\textbf{Acknowledgements.} We thank Pavel Izmailov and Micah Goldblum for helpful discussions. This work is supported by NSF CAREER IIS-2145492,
NSF I-DISRE 193471, NSF IIS-1910266, BigHat Biosciences, Capital One, and an Amazon Research Award.

\bibliography{references}
\bibliographystyle{plainnat}

\clearpage

\begin{appendices}

\crefalias{section}{appsec}
\crefalias{subsection}{appsec}
\crefalias{subsubsection}{appsec}

\setcounter{equation}{0}
\renewcommand{\theequation}{\thesection.\arabic{equation}}

\onecolumn

\vbox{%
\hsize\textwidth
\linewidth\hsize
\vskip 0.1in
\hrule height 4pt%
  \vskip 0.25in
  \vskip -\parskip%
\centering
{\LARGE\bf
Appendix\\[10pt]
\par}
\vskip 0.29in
  \vskip -\parskip
  \hrule height 1pt
  \vskip 0.09in%
}

\section*{Table of Contents}
\vspace*{-5pt}
\startcontents[sections]
\printcontents[sections]{l}{1}{\setcounter{tocdepth}{2}}

\clearpage

\section{\psmap is not Invariant to Reparametrization}
\label{appsec:psmap_reparam}

Parameter-space \map estimation has conceptual and theoretical shortcomings stemming from the lack of reparameterization-invariance of \map estimation.
Suppose we reparameterize the model parameters $\theta$ with $\theta' = \mathcal{R}(\theta)$ where $\mathcal{R}$ is an invertible transformation.
The prior density on the reparameterized parameters is obtained from the change-of-variable formula that states \mbox{$p'(\theta') = p(\theta) | \mathrm{det}^{-1} (\mathrm{d} \theta' /  \mathrm{d} \theta ) |$}. In the language of differential geometry, the prior density $p(\theta)$ is not a scalar but a scalar density, a quantity that is not invariant under coordinate transformations~\cite{weinberg1972gravitation}.
The probabilistic model is fundamentally unchanged, as we are merely viewing the parameters in a different coordinate system, but the \map objective is not invariant to reparameterization.
Specifically, the reparameterized parameter-space \map objective becomes
\begin{align}
    \calL'^{\map}(\theta')
    &=
    \sum\nolimits_{i=1}^N \log{p(y^{(i)}_{\calD} \mid x^{(i)}_{\calD} , \theta')} + \log{p'(\theta')}
    =
    \calL^{\map}(\theta) - \underbrace{\log |\mathrm{det}^{-1} (\mathrm{d} \theta' /  \mathrm{d} \theta )|}_{\text{new term}} .
\end{align}

Since $\calL'^{\map}(\theta')$ and the original objective $\calL^{\map}(\theta)$ differ by a term which is non-constant in the parameters if $T$ is non-linear, the maxima $\theta'^\map \defines \argmax_{\theta'} \calL'^{\map}(\theta') $ and $\theta^\map \defines \argmax_{\theta} \calL^{\map}(\theta)$ under the two objectives will be different. Importantly, by "different" we don't just mean $\theta'^\map \neq \theta^\map$ but $\theta'^\map \neq \mathcal{R}(\theta^\map)$. That is, they are not simply each other viewed in a different coordinate system but actually represent different functions. More accurately, therefore, one should say \map estimation is not \textit{equivariant} (rather than invariant) under reparameterization.
As a result, when using a parameter-space \map estimate to make predictions at test time, the predictions can change dramatically simply depending on how the model is parameterized.
In fact, one can reparameterize the model so that \psmap will return an arbitrary solution $\theta_0$ irrespective of the observed data, by choosing a reparameterization $\theta' = \mathcal{R}(\theta)$ such that $\log | \mathrm{det}^{-1} (\mathrm{d} \theta' /  \mathrm{d} \theta ) |_{\theta = \theta_0} = -\infty$. One such choice is $\theta' = \qty(\theta - \theta_0)^3,$ where the exponent is taken element-wise.

As a less extreme example, consider the reparameterization $\theta' = 1 / \theta$.
A Gaussian prior on the original parameters $p(\theta) \propto \exp(-\theta^2 / 2)$ translates to a prior $p'(\theta') \propto \exp(- 1 / 2\theta'^2 ) / \theta'^2 = \theta^2  \exp(-\theta^2 / 2)$ on the inverted parameters.
Note the transformed prior, when mapped onto the original parameters $\theta$, assigns a higher weight on non-zero values of $\theta$ due to the quadratic factor coming from the log determinant.
We illustrate this effect with a simple experiment where a linear model with RBF features is trained with a Gaussian likelihood and a Gaussian prior $\N(0, I)$ using \psmap to fit noisy observations from a simple function.
\Cref{fig:invert} show the predictions and learned weights when \psmap is performed both in the original parameterization and in the inverted parameterization. While \psmap favors small weights in the original parameterization, it favors non-zero weights in the inverted one, learning a less smooth function composed of a larger number of RBF bases.

\begin{figure}[h!]
\centering
    \subfloat[Learned functions]{
    \includegraphics[width=0.45\columnwidth]{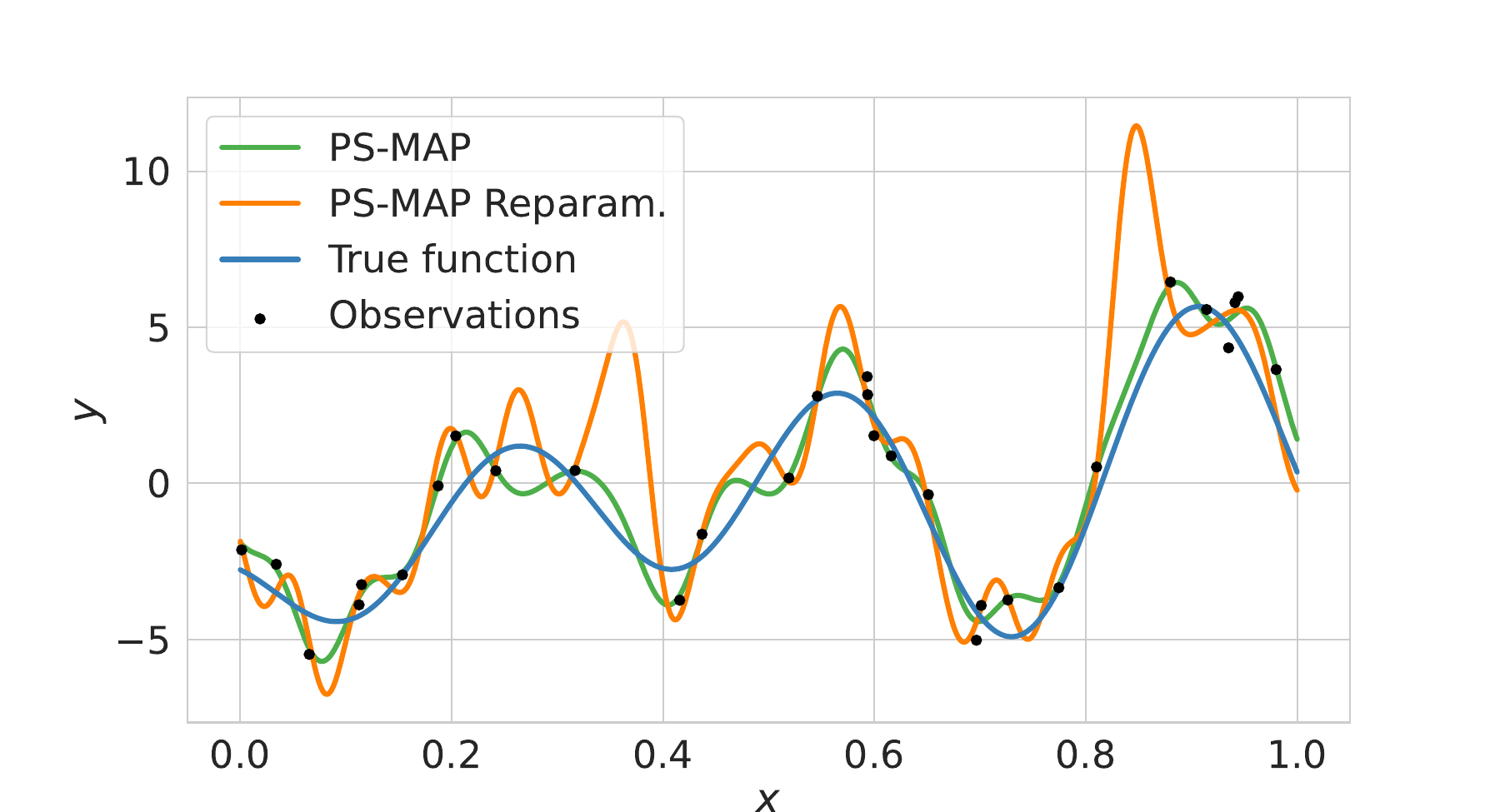}
    }
    \subfloat[Parameter distributions]{
    \includegraphics[width=0.45\columnwidth]{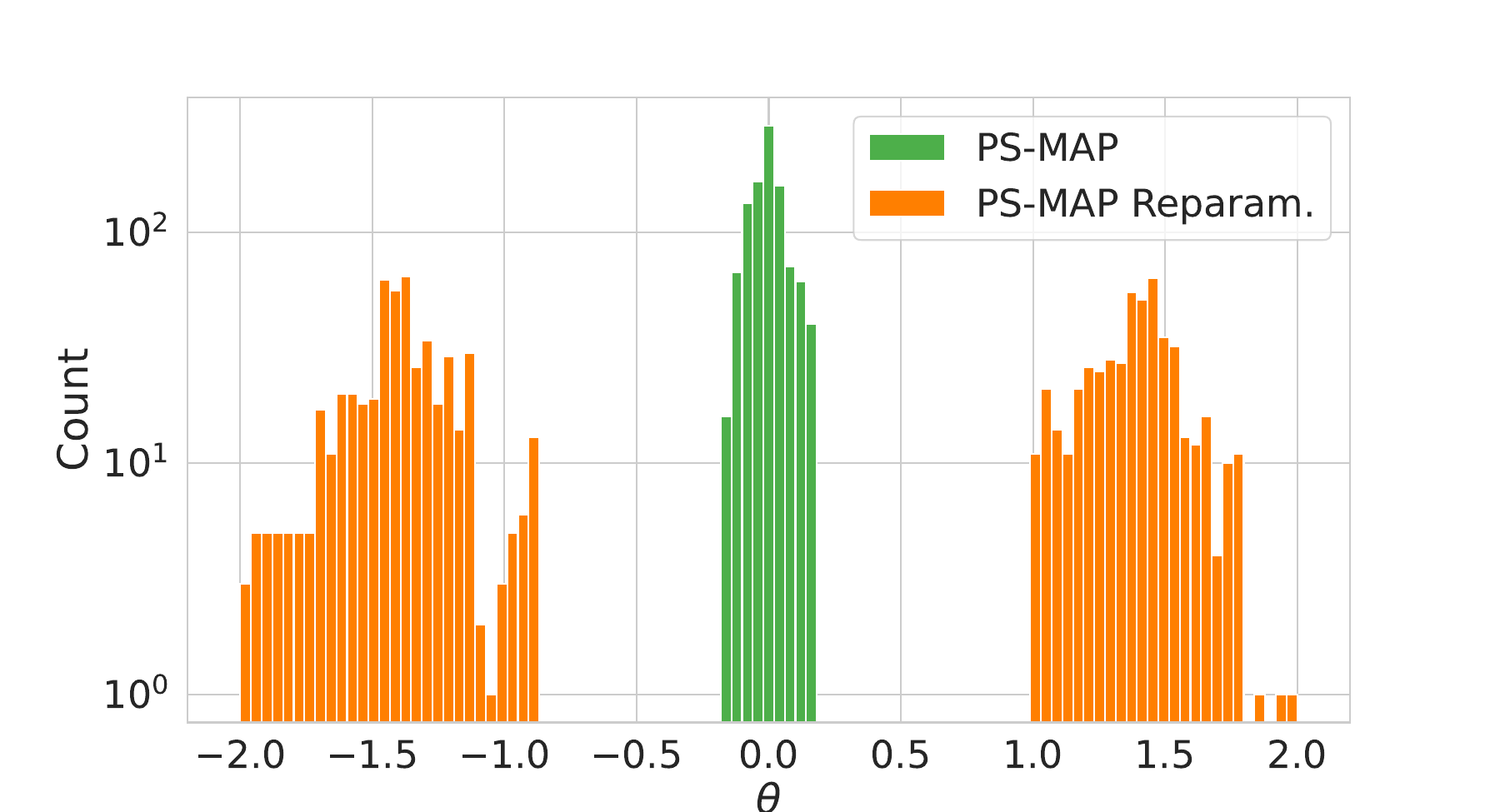}
    }
   \caption{
   \textbf{\psmap is not invariant to reparameterization.} Simply inverting the parameterization of the weights removes the preference for small coefficients in a linear model, leading to a less smooth learned function, even though the underlying probabilistic model is unchanged.
   }
    \label{fig:invert}
\vspace*{-25pt}
\end{figure}

\clearpage

\section{Mathematical Details \& Derivations}
\label{appsec:background_and_derivations}

\subsection{Prior Distributions over Functions}
\label{appsec:pushforward}

Since the function evaluations $f_\theta(\xeval)$ are related to the parameters $\theta$ through the map $T: \theta \mapsto f_\theta(\xeval),$ the prior density $p(f_\theta(\xeval))$ is simply the push forward of the density $p(\theta)$ through $T,$ which can be expressed without loss of generality using the Dirac delta function as
\begin{align}
    p(f_\theta(\xeval)) = \int_{\R^P} p(\theta') \delta\qty(T(\theta') - f_\theta(\xeval)) \,\mathrm{d}\theta' .
\end{align}
This generic expression as an integral over the parameter space is difficult to evaluate and thus not directly useful.
By assuming that $T$ is injective (one-to-one) modulo some global symmetries in parameter space (e.g., permutation of the neurons), which holds for many neural network architectures ~\cite{Bishop2006PatternRA}, \citet{wolpert1993fsmap} established the simplification in
\Cref{eq:fs-prior}.
Note that since the map $T$ is not guaranteed to be surjective (onto), especially when there are more evaluation points than parameters $(|\xeval| > P)$, $p(f_\theta(\xeval))$ does not generally have support everywhere and is to be interpreted as a surface density defined over the image of $T$~\cite{wolpert1993fsmap}. If the model parameterization has continuous symmetries, such as scaling symmetries in MLPs with ReLU activations \citep{dinh2017sharp}, then the log determinant in \Cref{eq:fs-prior} should be replaced by the log pseudo-determinant to account for the fact that the dimension of the image of $T$ is less than $P,$ the number of parameters.

\subsection{The General Case}
\label{appsec:general-case}
Here we present a more general result for which the result in \Cref{sec:general-objective} is a special case, without assuming the function output is univariate ($K=1$) or that the metric is diagonal.
To simplify notations, we suppress all $\theta$-dependence and write $f^k_\theta(x_i),$ the $k$-th output of the function at $x_i,$  as $f^k_i$ for some $\xeval=\{x_i\}_{i=1}^{M}$.
We use Einstein notation to imply summation over repeated indices and leave everything in un-vectorized forms.
In this more general case, the metric is given by $\d s^2 = g_{i k j \ell}\d f^k_i \d f^\ell_j,$ where $i,j$ are input indices and $k,\ell$ are output indices. Denote the un-vectorized $M$-by-$K$-by-$P$ Jacobian $J_{ika} \defines \partial_{\theta_a} f^k_i$.
The $P$-by-$P$ matrix $\mathcal{J}$ appearing inside the determinant in \Cref{eq:fs-prior} is now given by
\begin{align}
    \mathcal{J}_{ab}
    =
    J_{ika} g_{ikj\ell} J_{j\ell b} 
    =
    g_{ikj\ell} \partial_{\theta_a} f^{k}_{i} \partial_{\theta_b} f^\ell_j
\end{align}
In the limit of infinitely many evaluation points uniformly populating the domain $\calX$, the sum over $i,j$ becomes a double integral\footnote{To be more accurate, an integral of the form $\int \phi(x) \, \dx$ is the limit of $\sum_{i=1}^{M} \phi(x_i) \Delta x$ with the extra factor $\Delta x = x_{i+1} - x_i$ ($\forall i$), as $M \rightarrow \infty$.
However, accounting for this factor does not affect the final result in our case because again $\Delta x$ is independent of $\theta$.}
\begin{align}
    \mathcal{J}_{ab} = \int_{\calX \times \calX} \partial_{\theta_a} f_\theta^k(x) \partial_{\theta_b} f_\theta^\ell(x') g_{k\ell}(x,x')  \, \d x \, \d x',
\end{align}
where we have re-written the metric $g_{ikj\ell}$ as $g_{k\ell}(x_i, x_j)$ where $g: \calX \times \calX \rightarrow \R^{K \times K}$ is the metric represented now as a matrix-valued function. After dividing by a constant $Z,$ we can similarly write $\mathcal{J}_{ab}$ as the expectation
\begin{align}
    \mathcal{J}_{ab} = \,\E{p_{XX'CC'}}{ \partial_{\theta_a} f_\theta^C(X) \partial_{\theta_b} f_\theta^{C'}(X') \mathrm{sgn}(g_{CC'}(X,X'))} ,
\end{align}
where $p_{XX'CC'}(x,x',c,c') = \abs{g_{cc'}(x,x')}/Z$ for some normalization constant $Z$ that only shifts the log determinant by a $\theta$-independent constant. Note \Cref{eq:continous-fs-map-J} is recovered if the metric is diagonal, that is $g_{c,c'}(x,x') \neq 0$ only if $c=c'$ and $x=x'.$ We can further remove the assumption that $g$ is constant (independent of $f$), but at the cost of letting the normalization constant $Z$ depend on $f$ (or equivalently $\theta$), making it invalid to ignore $Z$ during optimization. While more general than the result presented in \Cref{sec:general-objective}, specifying a (possibly $\theta$-dependent) non-diagonal metric adds a significant amount of complexity without being particularly well-motivated. Therefore we did not further investigate this more general objective.

\subsection{Why Should a Constant Metric Affect \fsmap?}
\label{appsec:constant-metric}
One may object that a constant metric $g$ should have no effect in the \map estimate for $f_\theta.$ However, the subtlety arises from the fact that $p(f_\theta)$ does not have support everywhere but is, in fact, a surface density defined on the image of $T$ (as defined in \Cref{appsec:pushforward}), a curved manifold in function space which gives rise to locally preferred directions such that even a global linear transformation will change the density in a non-homogeneous way.
If $T$ were instead surjective, then in \Cref{sec:general-objective} $\jac_{\theta}(\xeval)$ would be a square matrix and we could write $\log \det (\jac_{\theta}(\xeval)^\top g\jac_{\theta}(\xeval)) = \log \det (\jac_{\theta}(\xeval)^\top \jac_{\theta}(\xeval)) + \log \det (g)$ and conclude that a constant metric indeed has no effect on the \map estimate.
Similarly, if the image of $T$ is not curved, meaning $\jac_{\theta}(\xeval)$ is constant in $\theta,$ then $\log \det (\jac_{\theta}(\xeval)^\top g\jac_{\theta}(\xeval))$ would be constant in $\theta$ regardless of $g$ and the metric would have no effect.

\subsection{Comments on the Infinite Limit}
\label{appsec:continuum-limit}
In \Cref{sec:general-objective}, we generalized the finite evaluation point objective from \Cref{eq:finite-fs-map} by considering the limit as the number of evaluation points $\xeval$ approaches infinity and the points cover the domain $\R^P$ uniformly and densely.
This technique, known as the continuum limit~\cite{peskin2018introduction}, is commonly used in physics to study continuous systems with infinite degrees of freedom, by first discretizing and then taking the limit as the discretization scale approaches zero, thereby recovering the behavior of the original system without directly dealing with intermediate (possibly ill-defined) infinite-dimensional quantities. Examples include lattice field theories where the continuous spacetime is discretized into a lattice, and numerical analysis where differential equations are discretized and solved on a mesh, where the solution often converges to the continuous solution as the mesh size goes to zero.

In a similar vein, we utilize this technique to sidestep direct engagement with the ill-defined quantity $p(f_\theta).$ It may be possible to assign a well-defined value to $p(f_\theta)$ through other techniques, though we expect the result obtained would be consistent with ours, given the continuum limit has historically yielded consistent results with potentially more refined approaches in other domains.

\subsection{Condition for a Non-Singular $\mathcal{J}(\theta; \pX)$}
\label{appsec:finite-density-condition}
\begin{proof}
Recall $\mathcal{J}(\theta; \pX) = \E{\pX}{\jac_{\theta}(X)^\top \jac_{\theta}(X)}.$
Suppose $\mathcal{J}(\theta; \pX)$ is singular. Then there is a vector $v \neq 0$ for which $0=v^\top \mathcal{J}(\theta; \pX) v = \E{\pX}{\norm{\jac_{\theta}(X)v}_2^2} = \int_\calX \norm{\jac_{\theta}(X)v}_2^2 p_X(x) \dx = 0$. Since the integrand is non-negative and continuous by assumption, it is zero everywhere. Therefore, we have $\sum_{i=1}^{P} v_i \partial_{\theta_i} f_\theta(x) = 0$ for all $x \in \mathrm{supp}(p_X)$ for a non-zero $v,$ showing $\{\partial_{\theta_i} f_{\theta}(\cdot)\}_{i=1}^{P}$ are linearly dependent functions over the support of $p_X$.

Conversely, if $\{\partial_{\theta_i} f_{\theta}(\cdot)\}_{i=1}^{P}$ are linearly dependent functions over the support of $p_X$, then $\sum_{i=1}^{P} v_i \partial_{\theta_i} f_\theta(x) = \jac_{\theta}(x)v = 0$ for all $x \in \mathrm{supp}(p_X)$ for some $v \neq 0.$ Hence $\mathcal{J}(\theta; \pX)v = \E{\pX}{\jac_{\theta}(X)^\top \jac_{\theta}(X) v} = 0,$ showing $\mathcal{J}(\theta; \pX)$ is singular.
\end{proof}

\subsection{Architectural Symmetries Imply Singular $\mathcal{J}(\theta; \pX)$}
\label{appsec:symmetry}
\begin{proof}
Differentiating with respect to $\theta,$ we have \mbox{$\jac_\theta = \partial_\theta f_\theta = \partial_\theta f_{S\theta} = \jac_{S\theta} S$} for all $\theta$. Suppose there exists $\theta^*$ that is invariant under $S,$ $S\theta^* = \theta^*,$ then  $\jac_{\theta^*} = \jac_{\theta^*} S$ and $\jac_{\theta^*} (I-S) = 0.$ Since, by assumption $S \neq I$, we have shown that $\jac_{\theta^*}$ has a non-trivial nullspace.
Because the nullspace of $\mathcal{J}(\theta^*; \pX)$ contains the nullspace of $\jac_{\theta^*},$ we conclude $\mathcal{J}(\theta; \pX)$ is also singular.
\end{proof}

As an example, consider a single hidden layer MLP $f_\theta(x) = w_2^\top \sigma(w_1 x),$ $\theta = \{w_1 \in \R^2, w_2 \in \R^2\},$ where we neglected the bias for simplicity.
The permutation $P_{12} \bigoplus P_{12}$ is a symmetry of $f_{\theta},$ where $\bigoplus$ is the direct sum and $P_{12}$ is the transposition $\mqty(0 & 1 \\ 1 & 0).$ The nullspace of $\jac_{\theta^*}$ contains the image of
\begin{align}
    \mqty(1 & -1 \\ -1 & 1) \bigoplus \mqty(1 & -1 \\ -1 & 1)
\end{align}
with a basis $\{(1,-1,0,0), (0,0,1,-1)\},$ for any $\theta^*$ of the form $(a,a,b,b).$ It's easy to check that perturbing the parameters in directions $(1,-1,0,0)$ and $(0,0,1,-1)$ leaves the function output unchanged, due to symmetry in the parameter values and the network topology. 

Continuous symmetries, among other reasons, can also lead to singular $\mathcal{J}(\theta; \pX)$ and are not covered by the above analysis, which is specific to discrete symmetries. One example is scaling symmetries in MLPs with ReLU activations \citep{dinh2017sharp}. However, these scaling symmetries cause $\mathcal{J}(\theta; \pX)$ to be singular for all $\theta$ in a manner that does not introduce any pathological optimum. This becomes clear when noting that the space of functions implemented by an MLP with $P$ parameters using ReLU activations lies in a space with dimension lower than $P$ due to high redundancies in the parameterization. The resolution is to simply replace all occurrences of $\log \det \mathcal{J}(\theta; \pX)$ with $\log |\mathcal{J}(\theta; \pX)|_+,$ where $|\cdot|_+$ represents the pseudo-determinant. This step is automatic if we optimize \Cref{eq:jitter-fs-map} instead of \Cref{eq:continuous-fs-map}, where adding a small jitter will automatically compute the log pseudo-determinant (up to an additive constant) when $\mathcal{J}(\theta; \pX)$ is singular. By contrast, symmetries such as permutation symmetries do create pathological optima because they only make the Jacobian singular at specific settings of $\theta,$ thereby assigning to those points infinitely higher prior density compared to others.

\subsection{Addressing the Infinite Prior Density Pathology with a Variational Perspective}
\label{sec:variational_remedy}
We now decribe the remedy that leads to the objective in \Cref{eq:variational-fs-map}. As shown in \Cref{sec:pathology_solution}, almost all commonly-used neural networks suffer from the pathology such that a singular Jacobian leads infinite prior density at the singularities. Such singularities arise because \fsmap optimize for a point approximation for the function space posterior, equivalent to variational inference with the family of delta functions $\{q(f_{\theta'} | \theta) = \delta(f_{\theta'} - f_\theta)\}_\theta.$

We can avoid the singularities if we instead choose a variational family where each member is a distribution over $f$ localized around $f_\theta$ for some $\theta$.
Namely, we consider the family $ \calQ \defines \{q(f_{\theta'}|\theta)\}_{\theta \in \Theta}$ parameterized by a mean-like parameter $\theta$ and a small but fixed entropy in function space $H(q(f_{\theta'}|\theta)) = \E{{q(f_{\theta'}|\theta)}}{-\log q(f_{\theta'}|\theta)} = h$ for all $\theta \in \Theta$.
For convenience, we overload the notation and use $q(\theta' | \theta)$ to denote the pullback of $q(f_{\theta'} | \theta)$ to parameter space. The resulting variational lower bound is
\begin{align*}
    &\>\quad \mathcal{L}_\mathrm{VLB}(\theta; \pX) \\
    &= \E{q(f_{\theta'} | \theta)}{\log \frac{q(f_{\theta'}|\theta)}{p(f_\theta|\D)}}
    \\
    &= \E{q(f_{\theta'} | \theta)}{-\log p(f_{\theta'}|\D)} - \overbrace{H(q(f_{\theta'}|\theta))}^{\mathrm{const.}}
    \\
    &= \E{q(f_{\theta'} | \theta)}{-\log p(\D| f_{\theta'})} 
    + \E{q(f_{\theta'} | \theta)}{- \log p(f_{\theta'})} + \mathrm{const.}
    && \text{(Bayes' rule)}
    \\
    &= \E{q(\theta' | \theta)}{-\log p(\D|\theta')} 
    + \E{q(\theta' | \theta)}{- \log p(\theta')} 
    + \frac{1}{2} \E{q(\theta' | \theta)}{\log \det(\mathcal{J}(\theta'; \pX)) }
    && \text{(\Cref{eq:continuous-fs-map})}
    \\
    &= -\log p(\D| f_\theta) - \log p(\theta) + \frac{1}{2} \E{q(\theta' | \theta)}{\log \det(\mathcal{J}(\theta'; \pX)) } + \order{h} + \mathrm{const.},         
\end{align*}

where in the last line we used the assumption that $q(\theta' | \theta)$ is localized around $\theta$ to write the expectation of $\log p(\D|\theta)$ and $\log p(\theta)$ as their values at $\theta' = \theta$ plus $\order{h}$ corrections (which we will henceforth omit since by assumption $h$ is tiny), because they vary smoothly as a function of $\theta$.
More care is required to deal with the remaining expectation, since when $\mathcal{J}(\theta; \pX)$ is singular at $\theta$, $\log \det(\mathcal{J}(\theta; \pX))$ is infinite, but its expectation, $\E{q(\theta' | \theta)}{\log \det(\mathcal{J}(\theta'; \pX))}$, must be finite (assuming $p(\theta)$ and $p(\D|\theta)$ are finite), given that $\mathcal{L}_\mathrm{VLB}(\theta; \pX)$ lower bounds the log marginal likelihood $\log p(\D) = \log \int p(\D | \theta) p(\theta) \d\theta < \infty.$ As we will show in \Cref{appsec:approx_exp}, the effect of taking the expectation of the log determinant is similar to imposing a lower limit $\epsilon$ on the eigenvalue of $\mathcal{J}(\theta; \pX)$, which can be approximated by adding a jitter $\epsilon$ when computing the log determinant. This effect is similar to applying a high-frequency cutoff to an image by convolving it with a localized Gaussian filter.

With this approximation, we arrive at a simple objective inspired by such a variational perspective, equivalent to adding a jitter inside the log determinant computation in the \fsmap objective,
\begin{align}
\label{eq:jitter-fs-map-expectation}
\begin{split}
    \hat{\calL}(\theta ; \pX)
    =
    \sum\nolimits_{i=1}^N \log{p(y^{(i)}_{\calD} | x^{(i)}_{\calD} , \theta)}
    + \log p(\theta)
    - \frac{1}{2}\log\det( \mathcal{J}(\theta; \pX) +  \epsilon I   ).
\end{split}
\end{align}

Note that while using a jitter is common in practice for numerical stability, it arises for an entirely different reason here.

\subsection{Approximating the Expectation}
\label{appsec:approx_exp}

Rewriting $\log \det (\mathcal{J}(\theta'; \pX))$ in terms of the eigenvalues $\lambda_i(\theta')$, we have
\begin{align}
    \E{q(\theta' | \theta)}{\log \det(\mathcal{J}(\theta'; \pX))}
    =
        \sum\nolimits_{i=1}^{P} \E{q(\theta' | \theta)}{\log \lambda_i(\theta')}.
\end{align}

Consider each term $\E{q(\theta' | \theta)}{\log \lambda_i(\theta')}$ in isolation. Since the distribution $q(\theta' | \theta)$ is highly localized by assumption, there are only two regimes.
If $\lambda_i(\theta) \neq 0$, then the variation of the log eigenvalue is small and the expectation is well approximated by $\log \lambda_i(\theta)$.
Otherwise, if $\lambda_i(\theta) = 0,$ then $\log \lambda_i(\theta')$ has a singularity at $\theta' = \theta$ and $\E{q(\theta' | \theta)}{\log \lambda_i(\theta')} \approx \log \lambda_i(\theta) = -\infty$ is no longer a valid approximation since we've shown the expectation is finite. Therefore, in this case, the expectation must evaluate to a large but finite quantity, say $\log(\epsilon)$ for some small $\epsilon,$ where the exact value of $\epsilon$ depends on the details of $q(\theta'|\theta)$ and $\lambda_i(\theta')$.
Consequently, taking the expectation can be approximated by adding a jitter $\epsilon I$ to $\mathcal{J}(\theta'; \pX)$ when computing the log determinant to clip its eigenvalues from below at some threshold $\epsilon > 0.$

\subsection{Monte Carlo Estimator for the Log Determinant}
\label{appsec:log-det-approx}

\textbf{Monte Carlo Estimator.}\quad
Exactly computing $\mathcal{J}(\theta; \pX)=\E{\pX}{ \jac_{\theta}(X)^\top \jac_{\theta}(X)}$ and its log determinant is often intractable. Instead, we can use a simple Monte Carlo estimator to approximate the expectation inside the log determinant:
\begin{align}
    \log\det \left( \E{\pX}{ \jac_{\theta}(X)^\top \jac_{\theta}(X) } \right)
    \approx
    \log\det( \frac{1}{S} \sum\nolimits_{j = 1}^{S} \jac_{\theta}(x^{(j)})^\top \jac_{\theta}(x^{(j)}) + \epsilon I) \label{eq:mc_fsmap} ,
\end{align}
where we add a small amount of jitter to prevent the matrix inside the determinant from becoming singular when $SK < P$. Since $\frac{1}{S}\sum\nolimits_{j = 1}^{M} \jac_{\theta}(x^{(j)})^\top \jac_{\theta}(x^{(j)}) = \frac{1}{S}\jac_\theta(\xeval_S)^\top\jac_\theta(\xeval_S)$ where $\xeval_S = \{x^{(j)}\}_{j=1}^{S}$ and $\jac_\theta(\xeval_S)$ is an $SK$-by-$P$ matrix, one can compute the determinant of $\frac{1}{S}\jac_\theta(\xeval_S)^\top\jac_\theta(\xeval_S) + \epsilon I$ through the product of squared singular values of $\jac_\theta(\xeval_S) / \sqrt{S}$ in $\mathcal{O}(\min(SKP^2, S^2K^2P))$ time (with zero singular values replaced with $\sqrt{\epsilon}$), much faster than $\order{P^3}$ for computing the original $P$-by-$P$ determinant if $S$ is small enough, without ever storing the $P \times P$ matrix $\mathcal{J}(\theta; \pX)$.

\textbf{Accuracy of the Estimator.}\quad
Due to the nonlinearity in taking the log determinant, the estimator is not unbiased. To test its accuracy, we evaluate the estimator with $S = \{800, 400, 200\}$ for a neural network with two inputs, two outputs, four hidden layers, and 898 parameters, using $\tanh$ activations. The network is chosen small enough so that computing the exact log determinant is feasible. In~\Cref{fig:logdet-approx}, we compare the exact log determinant for $\pX = \frac{1}{M}\sum_{i=1}^{M} \delta_{x_i}$ with $M=1,600$, 
and its estimate with $S$ Monte Carlo samples. Here $\{x_i\}$ are linearly spaced in the region $[-5,5]^2$.
We observe that using $S=800$ can almost perfectly approximate the exact log determinant.
With $S=400$ and $S=200$, the approximation underestimates the exact value because the number of zero singular values increases for $\frac{1}{S}\jac_\theta(\xeval_S)^\top\jac_\theta(\xeval_S),$ but it still maintains a strong correlation and monotonic relation with the exact value.
We see that it is possible to approximate the log-determinant with sufficient accuracy with simple Monte Carlo using a fraction of the compute and memory for exact evaluation, though there can be a larger discrepancy as we continue to decrease $S$.

\begin{figure}[!t]
\centering
\vspace*{-10pt}
    \subfloat[$S=800$]{
    \includegraphics[width=0.25\columnwidth]{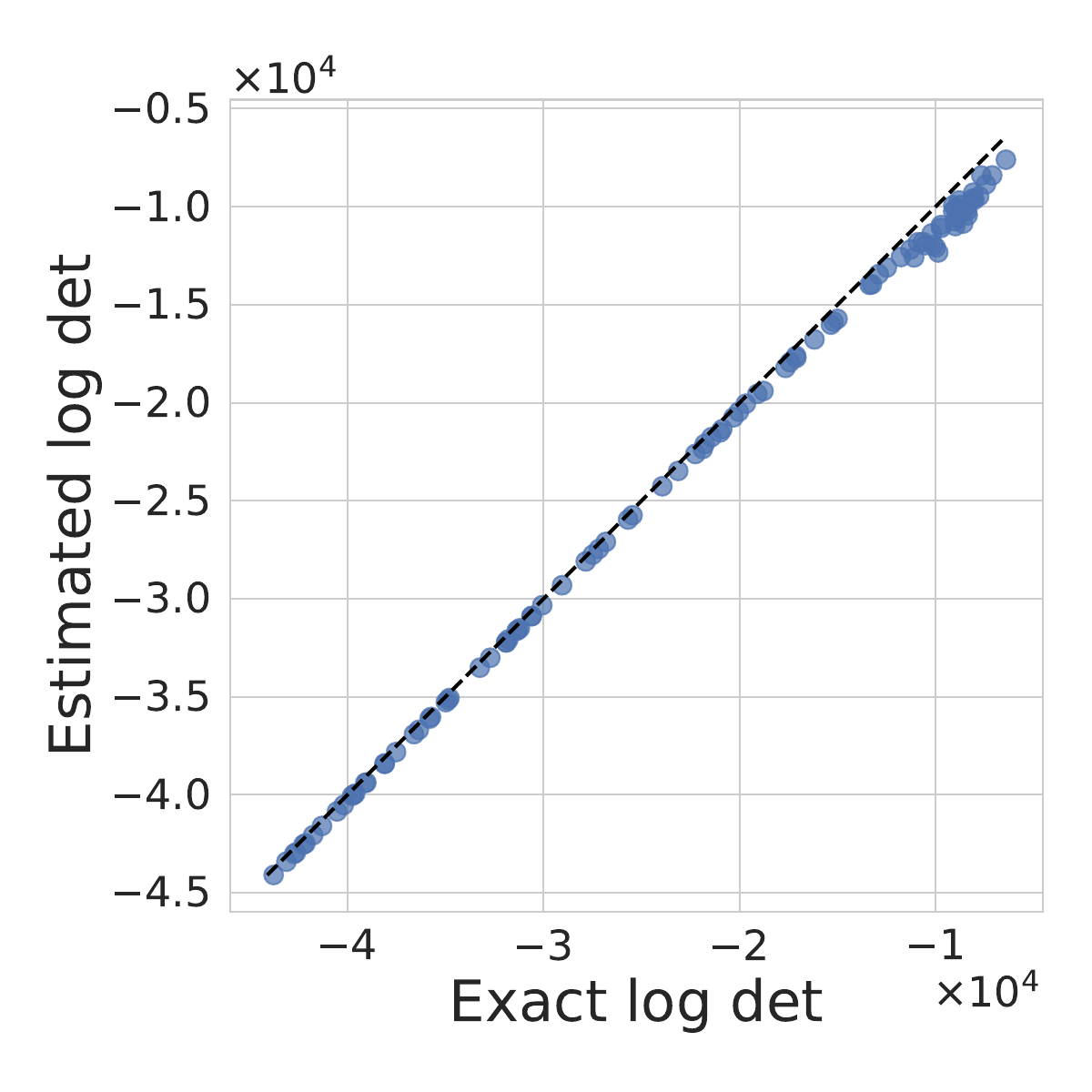}
    }
    \hspace*{20pt}
    \subfloat[$S=400$]{
    \includegraphics[width=0.25\columnwidth]{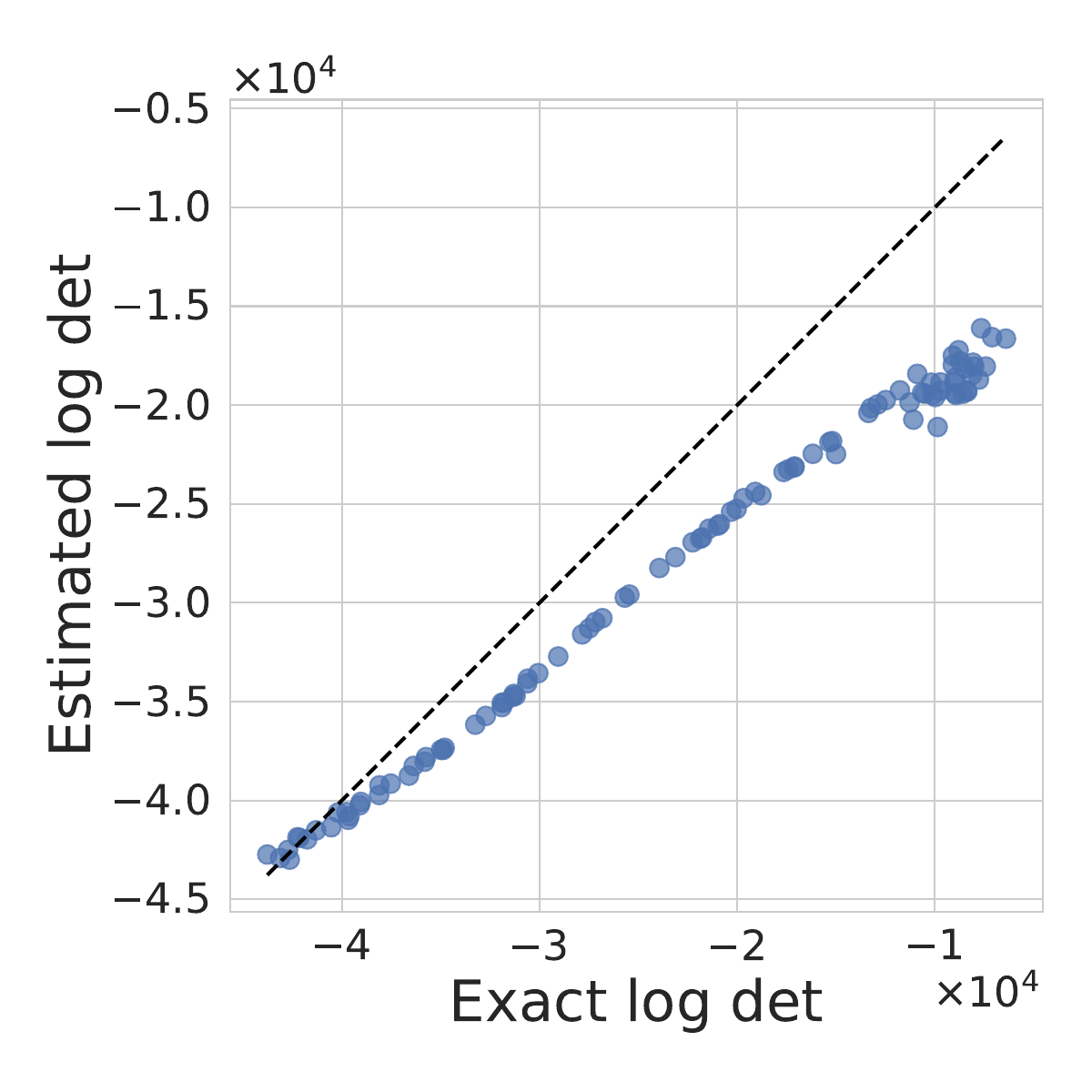}
    }
    \subfloat[$S=200$]{
    \includegraphics[width=0.25\columnwidth]{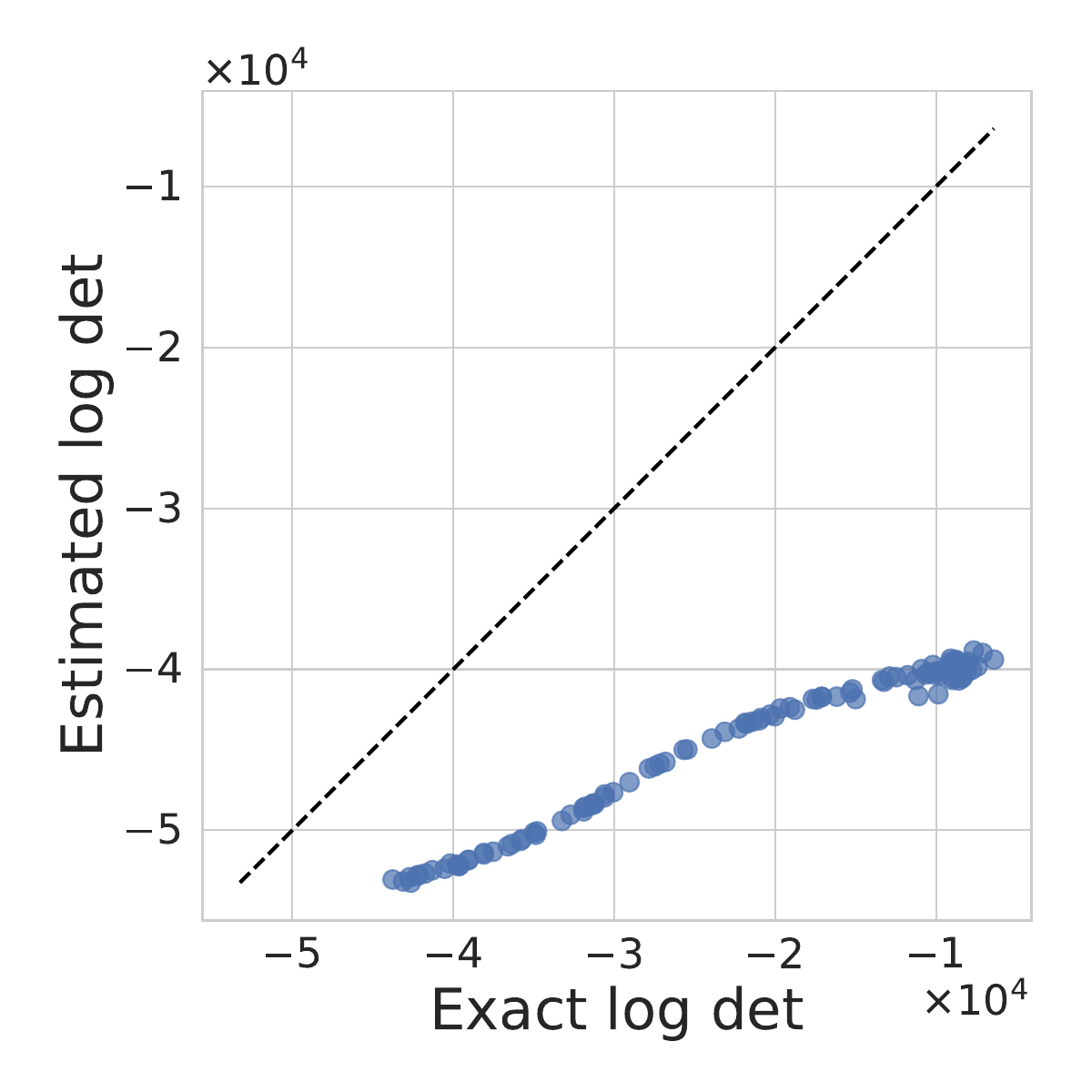}
    \label{fig:mk_smaller_than_p}
    }
   \caption{
   \textbf{Approximating the log determinant via an $S$-sample simple Monte Carlo estimator and jitter.} Each dot represents the exact and estimated log determinant evaluated at random parameters sampled by randomly initializing the network followed by scaling by a factor $s \sim \mathrm{loguniform}(0.1, 10)$ to include parameters of different magnitudes.
   The dashed line shows $x=y.$
   }
\label{fig:logdet-approx}
\end{figure}

\subsection{Laplacian Regularized \map Objective}
\label{appsec:lmap_approx}

For $\epsilon$ that is large enough compared to the eigenvalues of $\mathcal{J}(\theta; \pX)),$ a first order approximation to $\log\det(\mathcal{J}(\theta; \pX) + \epsilon I)$ can be sufficiently accurate. Expanding to first order in $\rho \defines \max_i \lambda_i / \epsilon,$ we have
\begin{align}
\label{eq:first-order}
\begin{split}
    \log\det(\mathcal{J}(\theta; \pX) + \epsilon I)
    &
    =
    \frac{1}{\epsilon} \sum\nolimits_{i=1}^P \lambda_i + c_{\epsilon} + \order{\rho^2}
    =
    \frac{1}{\epsilon} \Tr(\mathcal{J}(\theta; \pX)) + c_{\epsilon} + \order{\rho^2},
\end{split}    
\end{align}
where $c_{\epsilon} = P\log(\epsilon)$ is independent of $\theta$.
Defining $d(\theta, \theta') \defines \mathbb{E}_{\pX}[\norm{f_\theta(X) - f_{\theta'}(X)}^2]$, we have 
\begin{align}
    d(\theta, \theta + \psi) = \mathbb{E}_{\pX}[\norm{J_\theta(X) \psi}^2] + \order{\psi^4} = \psi^\top \mathcal{J}(\theta; \pX)) \psi+ \order{\psi^4},
\end{align}
which shows $\mathcal{J}(\theta; \pX)) = \frac{1}{2}\nabla_{\psi}^2 d(\theta, \theta + \psi) \big|_{\psi=0}.$ Therefore,
\begin{align}
    \Tr(\mathcal{J}(\theta; \pX)) = \frac{1}{2}\Tr(\nabla_{\psi}^2 d(\theta, \theta + \psi)) \big|_{\psi=0} = \frac{1}{2}\Delta_\psi d(\theta, \theta + \psi)) \big|_{\psi=0}
    \label{eq:laplacian-estimator},
\end{align}
where $\Delta$ is the Laplacian operator. 

To estimate the Laplacian, consider the following expectation:
\begin{align}
    &\E{\psi\sim\N(0, \beta^2I)}{d(\theta, \theta+\psi)} \\
    =\, &\E{\psi\sim\N(0, \beta^2I)}{\psi^\top \mathcal{J}(\theta; \pX)) \psi+ \order{\psi^4}} \\
    =\, &\E{\psi\sim\N(0, \beta^2I)}{\psi^\top \mathcal{J}(\theta; \pX)) \psi} + \order{\beta^4} \\
    =\, &\beta^2 \Tr(\mathcal{J}(\theta; \pX)) + \order{\beta^4},
\end{align}
where we used 
\begin{align}
\E{\psi}{\psi^\top \mathcal{J}(\theta; \pX)) \psi} = \sum_{ij} \E{\psi}{\psi_i \psi_j} \mathcal{J}_{ij}(\theta; \pX) = \beta^2 \sum_{ij} \delta_{ij} \mathcal{J}_{ij}(\theta; \pX) = \beta^2 \Tr(\mathcal{J}(\theta; \pX)).
\end{align}

Therefore, we have 
\begin{equation}
\label{eq:mc-laplace}
\Tr(\mathcal{J}(\theta; \pX)) = \frac{1}{2}\Delta_\psi d(\theta, \theta + \psi)) \big|_{\psi=0} = \frac{1}{\beta^2} \E{\psi\sim\N(0, \beta^2I)}{d(\theta, \theta+\psi)} + \order{\beta^2}.    
\end{equation}

Combining \Cref{eq:first-order} and \Cref{eq:mc-laplace}, we have shown \Cref{eq:jitter-fs-map} reduces to the more efficiently computable \lmap objective for small enough $\epsilon$ and $\beta:$
\begin{align}
    \calL_{\lmap}(\theta; \pX)
    \defines
    \sum\nolimits_{i=1}^N \log{p(y^{(i)}_{\calD} \mid x^{(i)}_{\calD} , f_{\theta}}) + \log{p(\theta)} - \frac{1}{2 \epsilon \beta^2}  \E{\psi \sim \N(0, \beta^2)}{d(\theta, \theta+\psi)}.
    \label{eq:lmap}
\end{align}
The entire objective is negated and divided by $N$ to yield the loss function
\begin{align}
    L_\lmap(\theta; \pX)
    \defines
        \underbrace{-\frac{1}{N}\sum\nolimits_{i=1}^N \log{p(y^{(i)}_{\calD} \mid x^{(i)}_{\calD} , f_{\theta}}) - \frac{1}{N}\log{p(\theta)}}_{\text{Standard regularized loss}} + \lambda \underbrace{\qty(\frac{1}{\beta^2}  \E{\psi \sim \N(0, \beta^2)}{d(\theta, \theta+\psi)})}_{\text{Laplacian regularization } R(\theta; \beta) },
    \label{eq:lmap-loss}
\end{align}
where we have absorbed the $1/N$ factor into the hyperparameter $\lambda=\frac{1}{2 \epsilon N}$. Therefore, using \lmap amounts to simply adding a regularization $\lambda R(\theta; \beta^2)$ to standard regularized training (\psmap). $\beta$ is a tolerance parameter for approximating the Laplacian and can simply be fixed to a small value such as $10^{-3}$.

\section{Further Empirical Results and Experimental Details}
\label{appsec:further_empirical}

\subsection{Training details for Synthetic Experiments}

For both \psmap and \fsmap, we train with the Adam~\citep{kingma2014adam} optimizer with a learning rate $0.1$ for 2,500 steps to maximize the respective log posteriors.
For \fsmap, we precompute the $\Phi$ matrix and reuse it throughout training. When $\pX$ is not $\mathrm{Uniform}(-1,1),$ $\Phi$ does not have a simple form and we use 10,000 Monte Carlo samples to evaluate it.

\subsection{Hyperparameters for UCI Regression}
We use an MLP with 3 hidden layers, 256 units, and ReLU activations. We train it with the Adam optimizer for 10,000 steps with a learning rate of $10^{-3}.$ For each dataset, we tune hyperparameters based on validation RMSE, where the validation set is constructed by holding out $10\%$ of training data. We tune both the weight decay (corresponding to prior precision) and \lmap's $\lambda$ over the choices $\{10^{-5}, 10^{-4}, 10^{-3}, 10^{-2}, 10^{-1}\}.$ We then report the mean and standard error of test RMSE across 6 runs using the best hyperparameters selected this way. In each dataset, the inputs and outputs are standardized based on training statistics.

\subsection{Hyperparameters for Image Classification Experiments}
\label{sec:ic_hyperparams}

Both the weight decay scale, corresponding to the variance of a Gaussian prior, and the \lmap hyperparameter $\lambda$ in~\Cref{eq:lmap-loss} is tuned over the range $[10^{-1}, 10^{-10}]$ using randomized grid search.
For \psmap, $\lambda$ is set to 0. The parameter variance in the Laplacian estimator is fixed to $\beta^2 = 10^{-6}$. In addition, we clip the gradients to unit norm.
We use a learning rate of $0.1$ with SGD and a cosine decay schedule over 50 epochs for FashionMNIST and 200 epochs for CIFAR-10.
The mini-batch size is fixed to 128.

\subsection{Verifying the \lmap Approximation}
To verify that the \lmap objective is a good approximation to Equation~\ref{eq:jitter-fs-map-expectation} when $\epsilon$ is large enough compared to the eigenvalues of $\mathcal{J}(\theta; \pX),$ we compare $\log\det(\mathcal{J}(\theta; \pX) + \epsilon I) - P \log(\epsilon)$ with the Laplacian estimate $\frac{1}{2\epsilon \beta^2}  \E{\psi \sim \N(0, \beta^2)}{d(\theta, \theta+\psi)}$ used by \lmap in~\Cref{appsec:lmap_approx} as its first order approximation in $\max_i \lambda_i(\theta) / \epsilon$. We reuse the same network architecture, evaluation distribution $\pX,$ and sampling procedure for the parameters from~\Cref{appsec:log-det-approx} to perform the comparison in~\Cref{fig:laplace-vs-logdet} and color each point by $\bar{\lambda}(\theta)/\epsilon$, the average eigenvalue of $\mathcal{J}(\theta;\pX)$ divided by $\epsilon.$ The smaller this value, the better the approximation should be.
We observe that indeed as $\epsilon$ increases and $\bar{\lambda}(\theta)/\epsilon$ approaches 0, the Laplacian estimate becomes a more accurate approximation of the log determinant.
Interestingly, even when $\epsilon \ll \bar{\lambda}(\theta)$ and the approximation is not accurate, there still appears to be a monotonic relation between the estimate and the log determinant, suggesting that the Laplacian estimate will continue to produce a qualitatively similar regularization effect as the log determinant in that regime.

\begin{figure}[!ht]
\centering
\vspace*{-10pt}
    \subfloat[$\epsilon = 10^{-3}$]{
    \includegraphics[width=0.23\columnwidth]{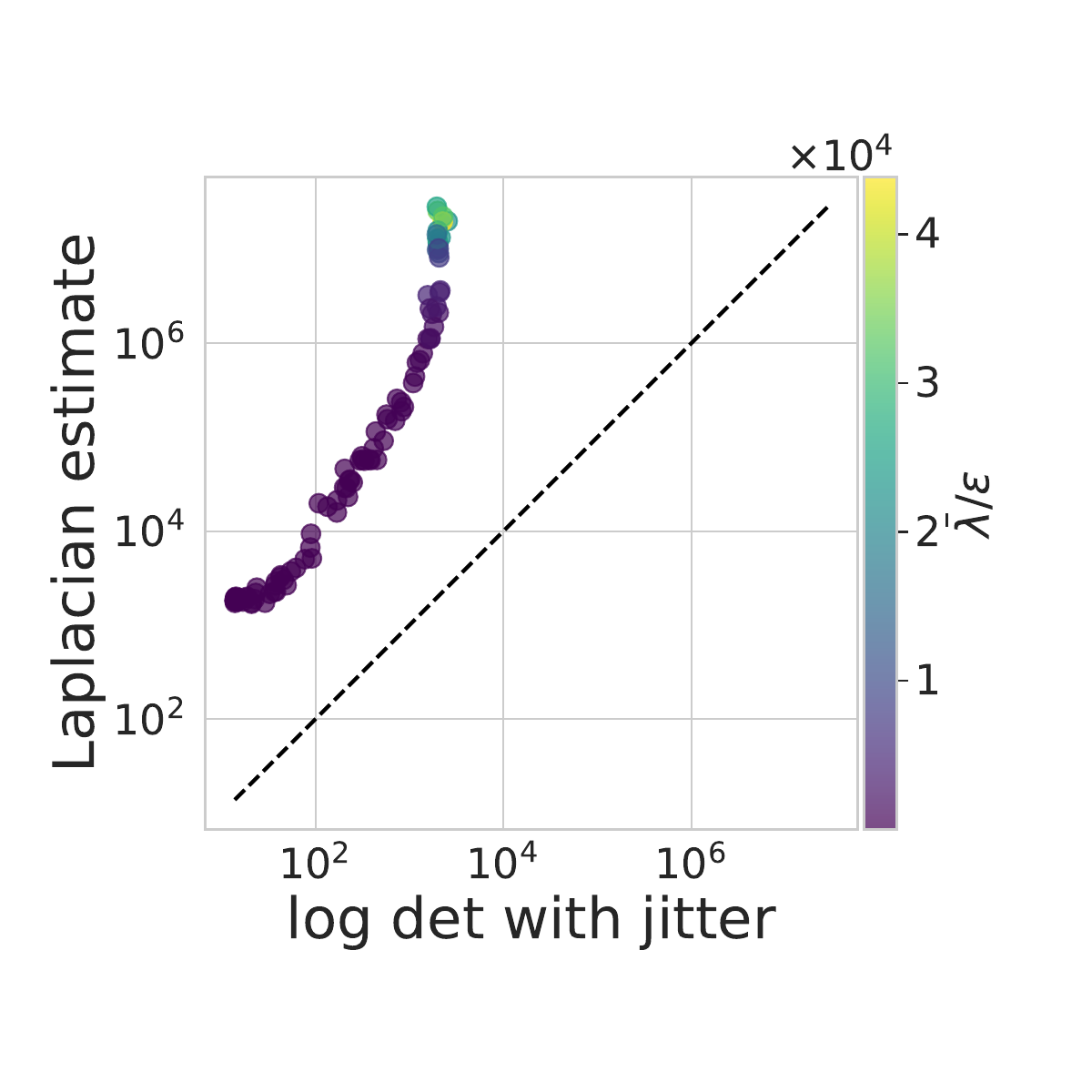}
    }
    \subfloat[$\epsilon = 1$]{
    \includegraphics[width=0.23\columnwidth]{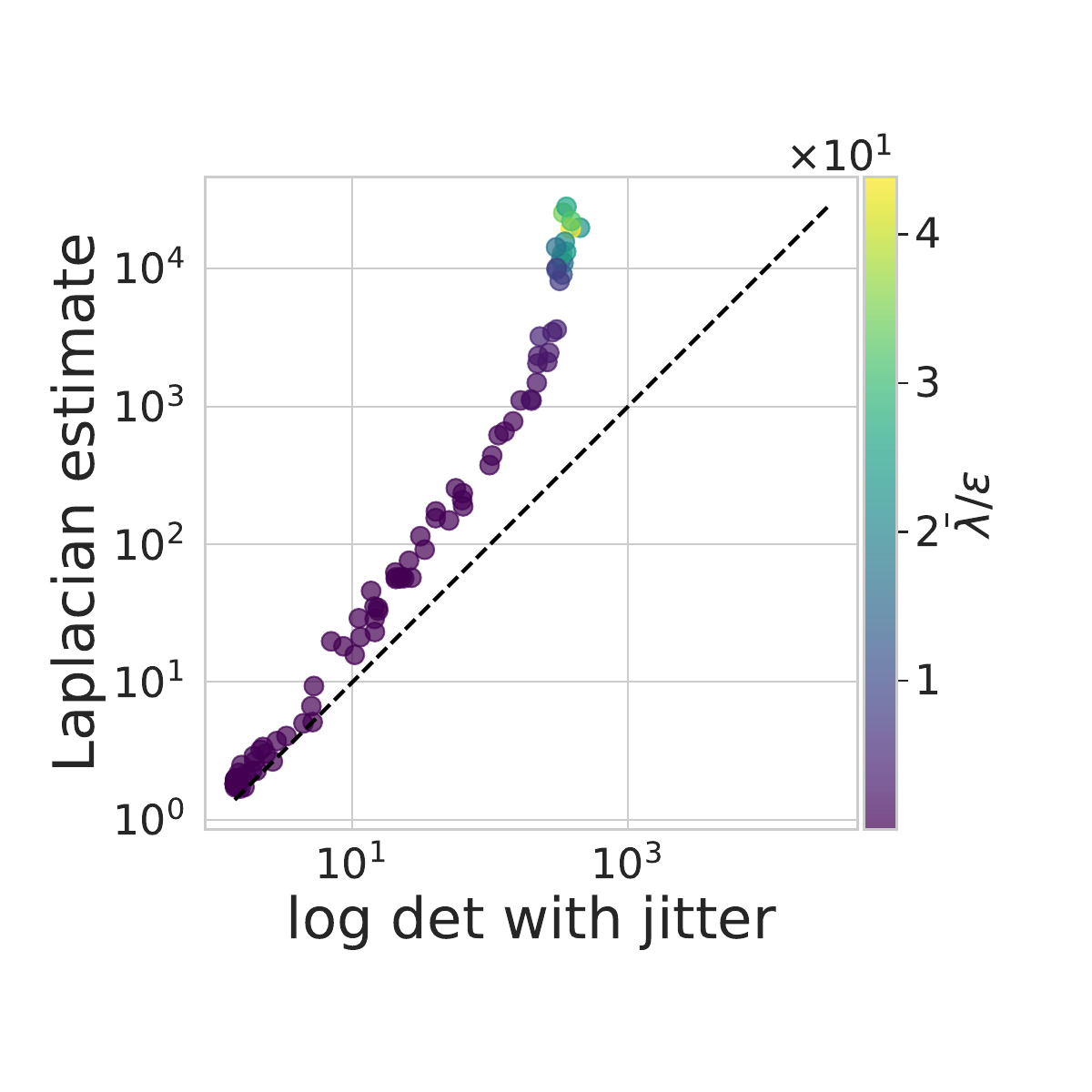}
    }
    \subfloat[$\epsilon = 10$]{
    \includegraphics[width=0.23\columnwidth]{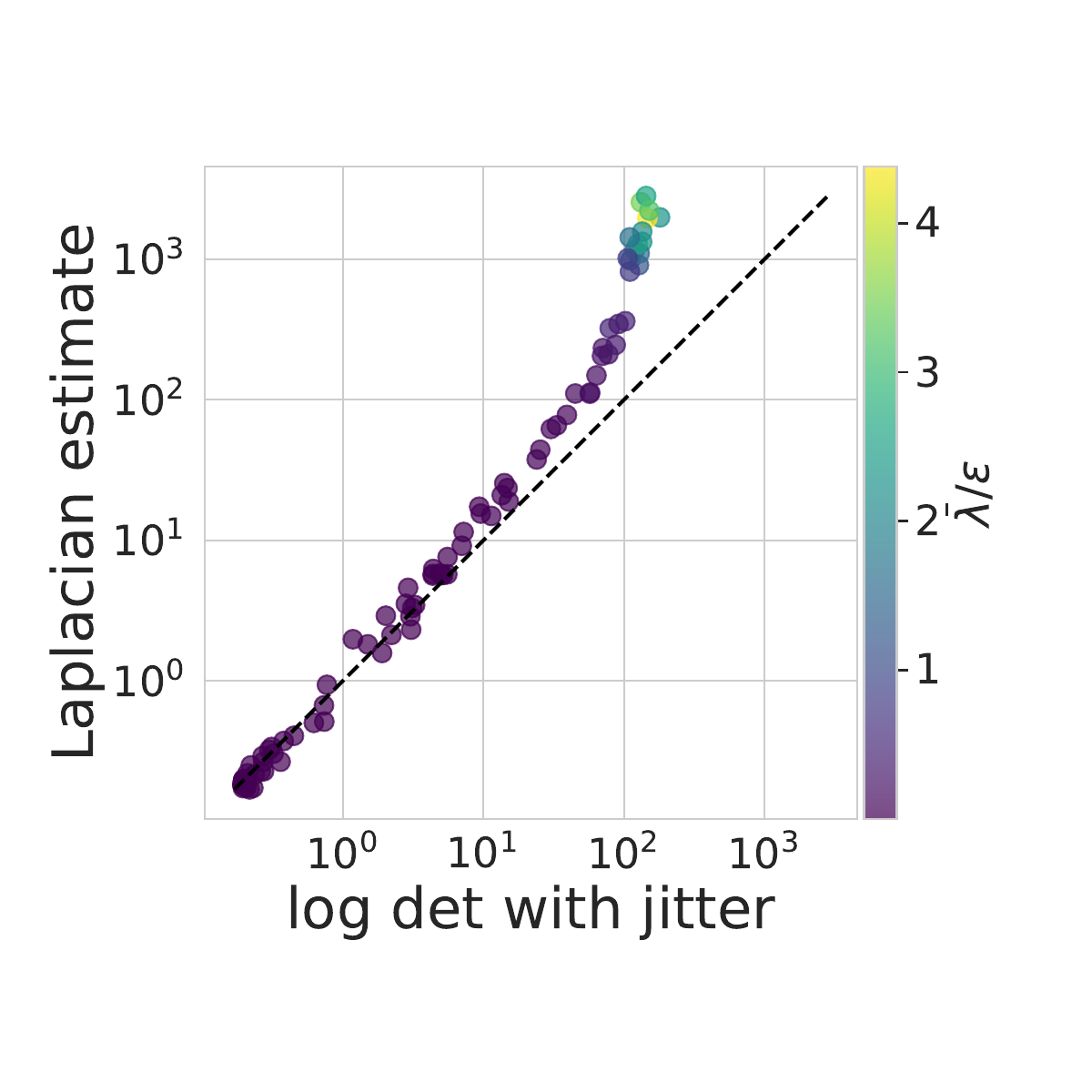}
    }
    \subfloat[$\epsilon = 10^3$]{
    \includegraphics[width=0.23\columnwidth]{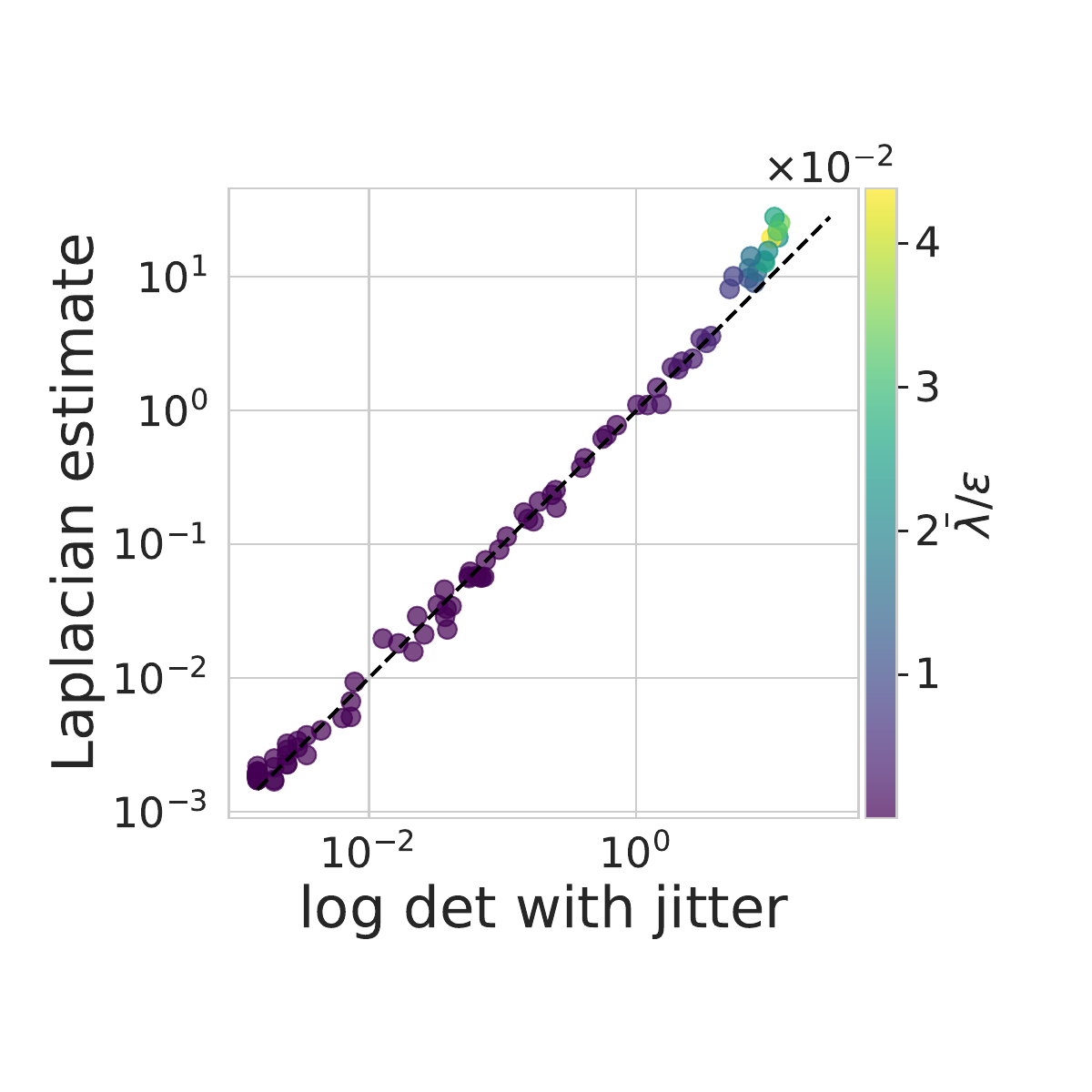}
    }
   \caption{
   \textbf{Log determinant v.s. the Laplacian estimate at different values of} $\epsilon.$ The $x$-axis shows $\log\det(\mathcal{J}(\theta; \pX) + \epsilon I) - P \log(\epsilon)$ while the $y$-axis shows its first order approximation $\frac{1}{2\epsilon \beta^2}  \E{\psi \sim \N(0, \beta^2)}{d(\theta, \theta+\psi)},$ estimated with $10$ samples of $\psi.$
   }
    \label{fig:laplace-vs-logdet}
\end{figure}

\vspace*{-5pt}
\subsection{Effective Eigenvalue Regularization using the Laplacian Regularizer}
\begin{wrapfigure}{r}{.3\linewidth}
    \centering
    \vspace*{-16pt}
    \hspace*{-7pt}\includegraphics[width=1.05\linewidth]{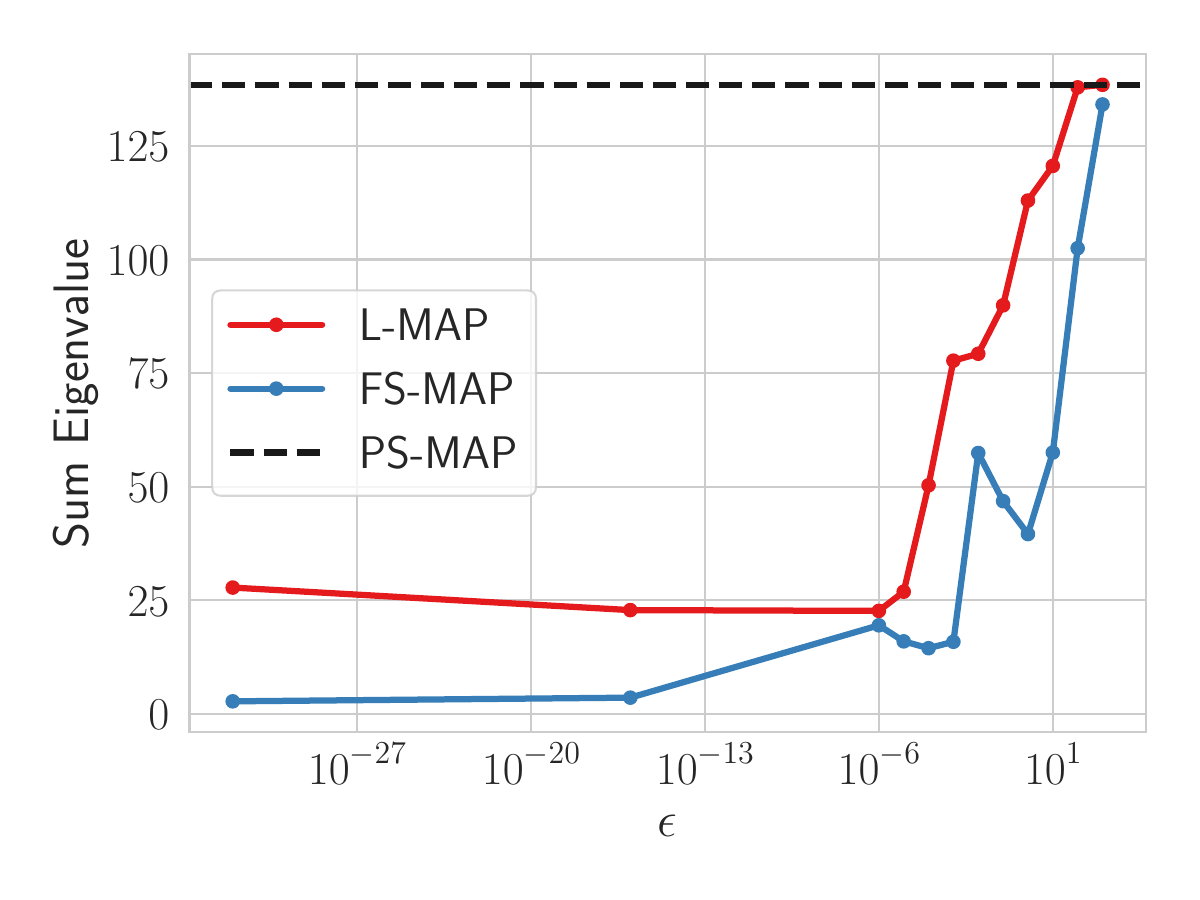}
    \vspace*{-20pt}
   \caption{
   \small \lmap is effective at minimizing the eigenvalues.
   }
    \label{fig:two_moons_eig}
    \vspace{-10pt}
\end{wrapfigure}

The Laplacian regularizer in \Cref{eq:lmap} only uses a sample-based estimator that is unbiased only in the limit that $\sigma \rightarrow 0$. To keep the computational overhead at a minimum so that \lmap can scale to large neural networks, we only take 1 sample of $\psi$ per gradient step. To test the effectiveness of \lmap in regularizing the eigenvalues of $\mathcal{J}(\theta; \pX)$ under this practical setting, we train an MLP with 2 hidden layers, 16 units, and $\tanh$ activations on the Two Moons dataset (generated with \texttt{sklearn.datasets.make\_moons(n\_samples=200, shuffle=True, noise=0.2, random\_state=0)}) for $10^4$ steps with the Adam optimizer and a learning rate of $0.001$. Here we choose $\pX = \frac{1}{M}\sum_{i=1}^{M} \delta_{x_i}$ with $M=1,600$ and $\{x_i\}$ linearly spaced in the region $[-5,5]^2$.
In~\Cref{fig:two_moons_eig}, we compare the sum of eigenvalues of $\mathcal{J}(\theta; \pX)$ for $\lmap$ and $\fsmap$ with different levels of jitter $\epsilon.$ Both \fsmap and \lmap significantly reduce the eigenvalues compared to \psmap with small values of $\epsilon,$ corresponding to stronger regularization.

\subsection{Visualizing the Effect of Laplacian Regualrization}

The hyperparameter $\epsilon$ is inversely related to the strength of the Laplacian regularization. We visualize the effect of $\epsilon$ in \Cref{fig:two_moons_lmap}, showing the \lmap solution varies smoothly with $\epsilon,$ evolving from a near-zero function that severely underfits the data to one that fits the data perfectly.

\begin{figure}[!ht]
    \centering
    \subfloat[$\epsilon=10^{-2}$]{
    \includegraphics[width=0.45\columnwidth]{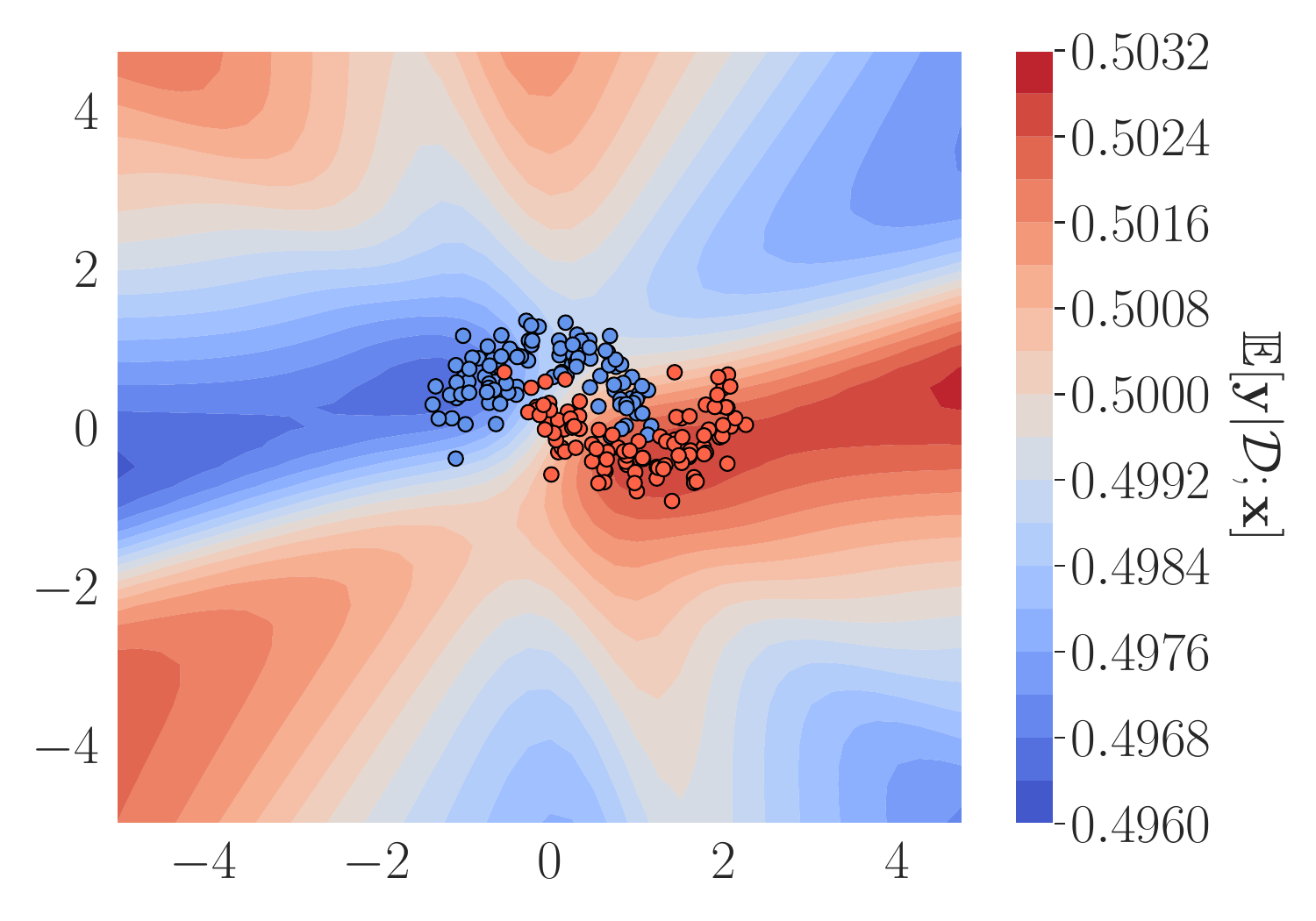}
    }
    \subfloat[$\epsilon=10^{-1}$]{
    \includegraphics[width=0.45\columnwidth]{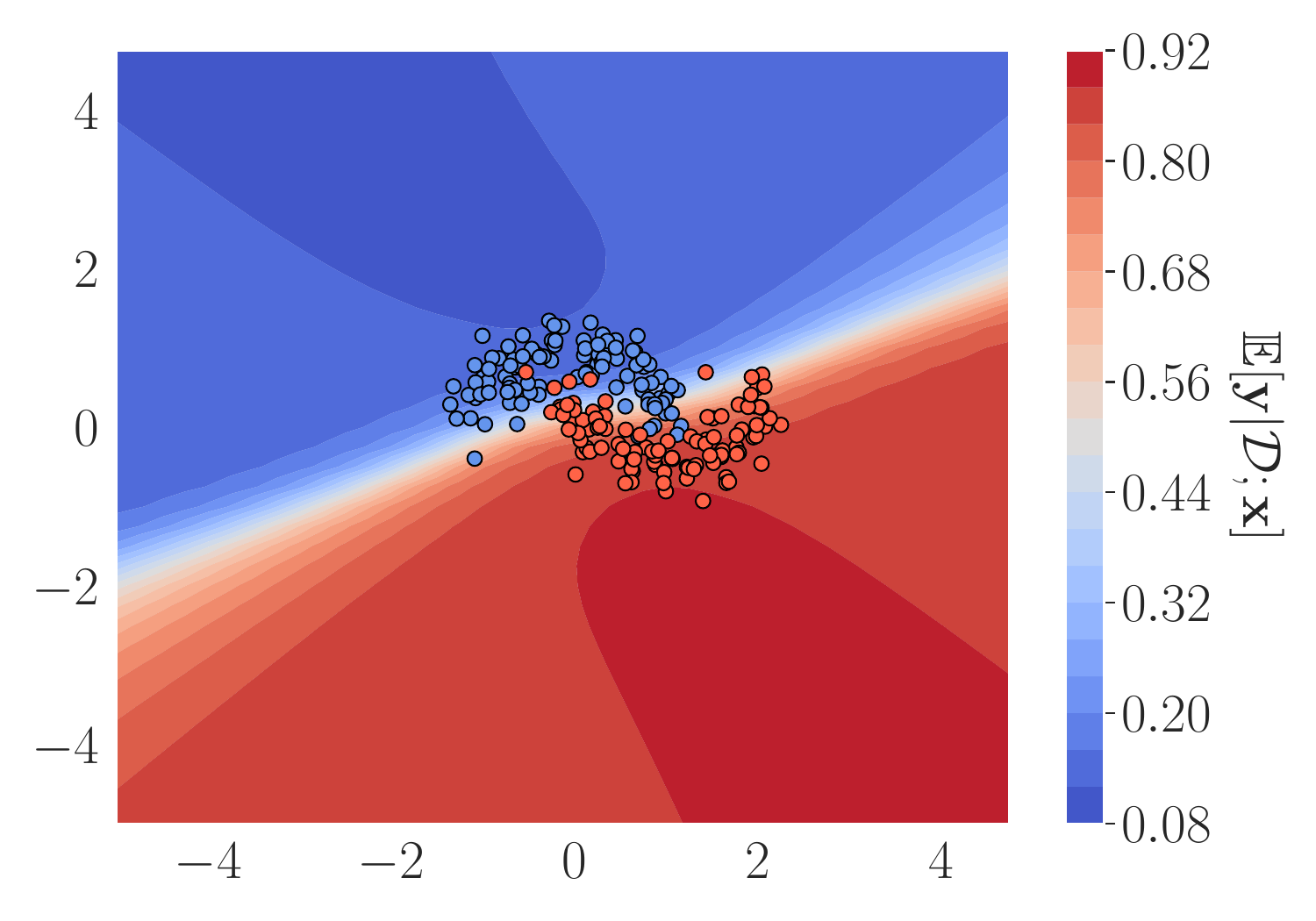}
    } 
    \\
    \subfloat[$\epsilon=1$]{
    \includegraphics[width=0.45\columnwidth]{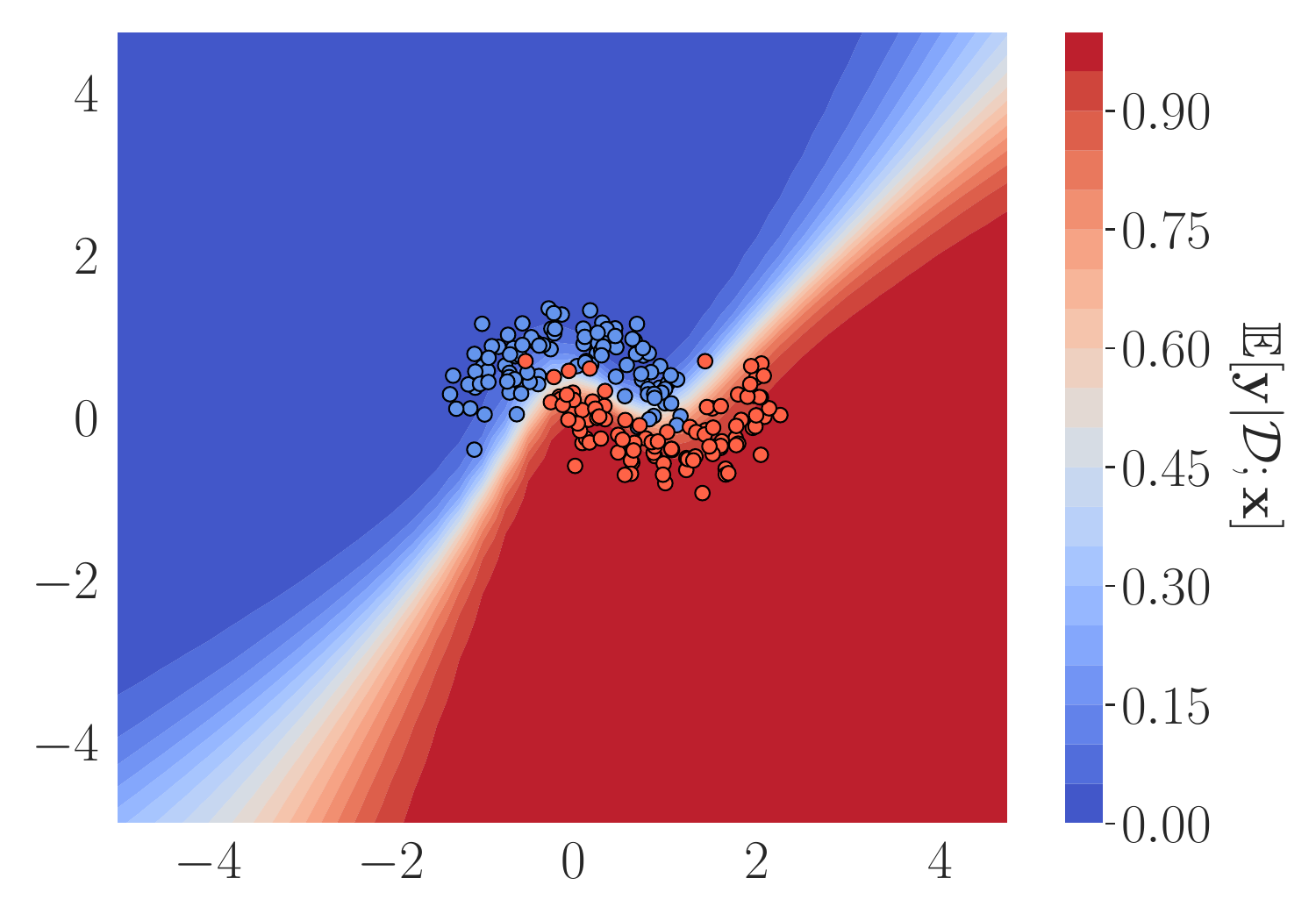}
    }
    \subfloat[$\epsilon=10$]{
    \includegraphics[width=0.45\columnwidth]{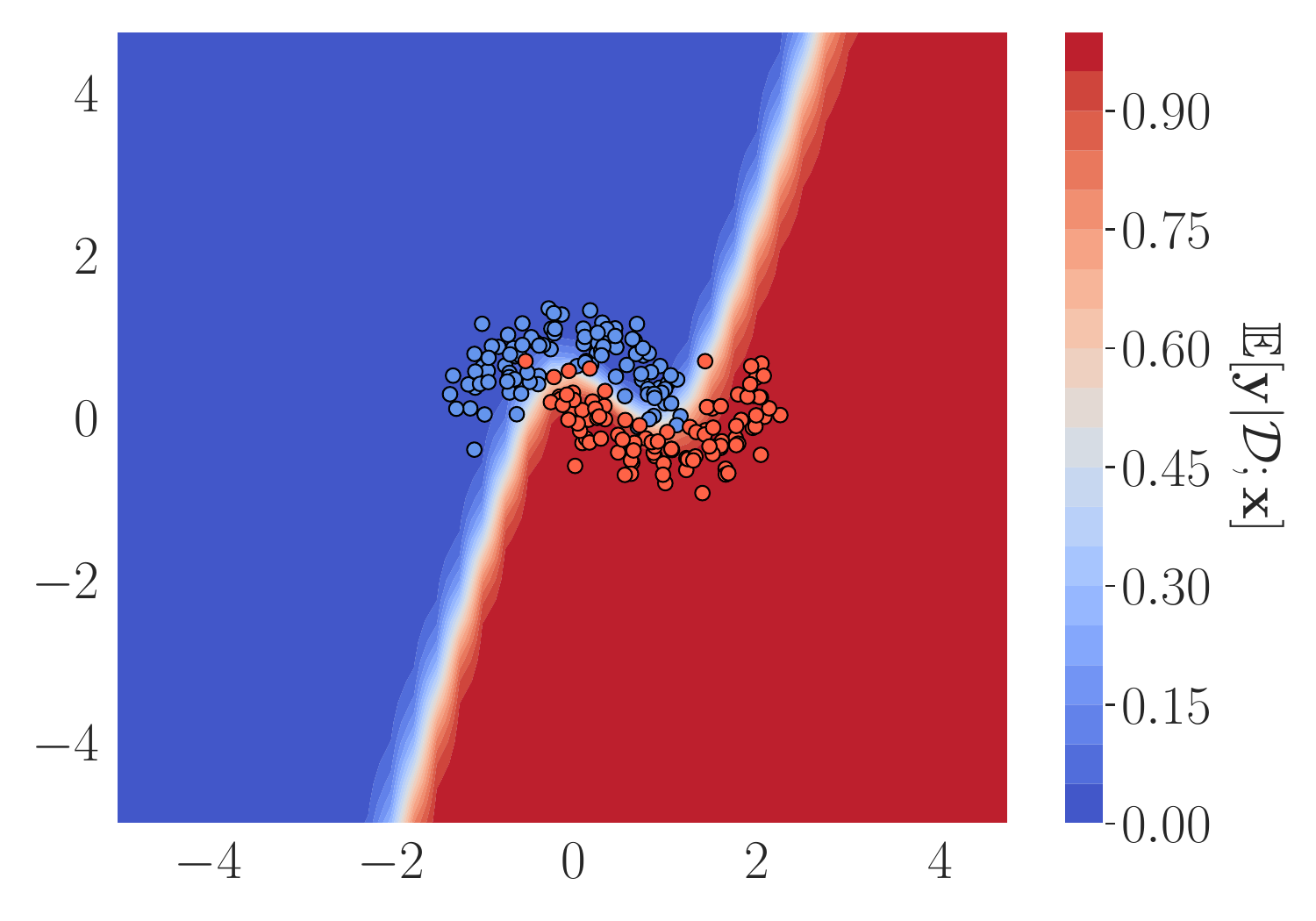}    } 
   \caption{
   \textbf{Visualization of \lmap solustions at various $\epsilon$}.
   }
    \label{fig:two_moons_lmap}
\end{figure}

\clearpage

\subsection{Further Neural Network Experiments}

In \Cref{tab:eval_mc_size}, we report the detailed results for varying the number of Monte Carlo samples $S$ as reported by \Cref{sec:experiments} and \Cref{fig:ece_lmap_lnoise}. As noted previously, the performance is fairly robust to this hyperparameter $S$ in terms of accuracy, selective accuracy, and negative $\log$-likelihood. However, we find that increasing $S$ can sometimes lead to significant improvement in calibration, as seen for FashionMNIST \citep{xiao2017/online}.

\setlength{\tabcolsep}{0.7pt}
\begin{table}[ht]
\small
\centering
\caption{This table provides detailed quantitative performance for the results plotted in \Cref{fig:ece_lmap_lnoise}, ablating the choice of the number of samples used for evaluation of the Laplacian estimator $S$. The standard deviations are reported over five trials.}
\label{tab:eval_mc_size}
\vskip 0.1in
\begin{sc}
\begin{tabular}{l|cccc|cccc}
\toprule
\multirow{ 2}{*}{\# Samples ($S$)} & \multicolumn{4}{c}{FashionMNIST} & \multicolumn{4}{c}{CIFAR-10} \\
& Acc. $\uparrow$ & Sel. Pred. $\uparrow$ & NLL $\downarrow$ & ECE $\downarrow$ & Acc. $\uparrow$ & Sel. Pred. $\uparrow$ & NLL $\downarrow$ & ECE $\downarrow$ \\
\midrule
4 & $94.0\% \pms{0.1}$ & $99.2\% \pms{0.0}$ & $0.28 \pms{0.01}$ & $4.3\% \pms{0.2}$ & $95.6\% \pms{0.1}$ & $99.5\% \pms{0.0}$ & $0.17 \pms{0.01}$ & $2.4\% \pms{0.1}$ \\
8 & $94.0\% \pms{0.2}$ & $99.2\% \pms{0.0}$ & $0.27 \pms{0.01}$ & $4.2\% \pms{0.3}$ & $95.6\% \pms{0.1}$ & $99.5\% \pms{0.0}$ & $0.17 \pms{0.00}$ & $2.2\% \pms{0.1}$ \\
16 & $94.1\% \pms{0.1}$ & $99.2\% \pms{0.0}$ & $0.27 \pms{0.01}$ & $4.1\% \pms{0.1}$ & $95.6\% \pms{0.2}$ & $99.5\% \pms{0.0}$ & $0.17 \pms{0.00}$ & $2.1\% \pms{0.1}$ \\
32 & $93.8\% \pms{0.1}$ & $99.2\% \pms{0.0}$ & $0.28 \pms{0.01}$ & $4.4\% \pms{0.2}$ & $95.5\% \pms{0.1}$ & $99.5\% \pms{0.0}$ & $0.16 \pms{0.00}$ & $1.8\% \pms{0.2}$ \\
64 & $94.0\% \pms{0.1}$ & $99.2\% \pms{0.0}$ & $0.27 \pms{0.0}$ & $4.2\% \pms{0.1}$ & $95.7\% \pms{0.2}$ & $99.5\% \pms{0.0}$ & $0.16 \pms{0.00}$ & $1.7\% \pms{0.2}$ \\
128 & $94.1\% \pms{0.1}$ & $99.2\% \pms{0.1}$ & $0.26 \pms{0.01}$ & $4.1\% \pms{0.1}$ & $95.5\% \pms{0.1}$ & $99.5\% \pms{0.0}$ & $0.16 \pms{0.00}$ & $1.4\% \pms{0.1}$ \\
256 & $94.0\% \pms{0.2}$ & $99.2\% \pms{0.0}$ & $0.27 \pms{0.0}$ & $4.3\% \pms{0.1}$ & $95.5\% \pms{0.1}$ & $99.5\% \pms{0.0}$ & $0.16 \pms{0.00}$ & $1.2\% \pms{0.2}$ \\
512 & $93.9\% \pms{0.1}$ & $99.1\% \pms{0.0}$ & $0.26 \pms{0.01}$ & $4.2\% \pms{0.1}$ & $95.5\% \pms{0.1}$ & $99.5\% \pms{0.0}$ & $0.16 \pms{0.00}$ & $1.2\% \pms{0.1}$ \\
\bottomrule
\end{tabular}
\end{sc}
\end{table}

\textbf{Distribution Shift.}\quad  We additionally evaluate our \lmap trained models to assess their performance under covariate shift. Specifically, for models trained on CIFAR-10 \citep{krizhevsky2009learning}, we use the additional test set from CIFAR-10.1 \citep{recht2018cifar10.1} which contains additional images collected after the original data, mimicing a covariate shift. As reported in \cref{tab:c101}, \lmap tends to improve calibration, while retaining performance in terms of accuracy.

\setlength{\tabcolsep}{18.1pt}
\begin{table}[ht]
\small
\vspace*{-5pt}
\centering
\caption{We evaluate the performance of CIFAR-10.1 \citep{recht2018cifar10.1} using the models trained on CIFAR-10 \citep{krizhevsky2009learning}. \lmap tends to improve the data fit in terms of the log likelihood and is better calibrated while retaining the same performance as \psmap. The standard deviations are reported over five trials.}
\label{tab:c101}
\vskip 0.1in
\begin{sc}
\begin{tabular}{l|cccc}
\toprule
Method & Acc. $\uparrow$ & Sel. Pred. $\uparrow$ & NLL $\downarrow$ & ECE $\downarrow$ \\
\midrule
\psmap & $89.2\% \pms{0.4}$ & $97.8\% \pms{0.2}$ & $0.43 \pms{0.03}$ & $6.2\% \pms{0.3}$ \\
\lmap & $89.2\% \pms{0.7}$ & $97.9\% \pms{0.2}$ & $0.40 \pms{0.02}$ & $4.1\% \pms{0.5}$ \\
\bottomrule
\end{tabular}
\end{sc}
\end{table}

\textbf{Transfer Learning.} Using ResNet-50 trained on ImageNet, we test the effectiveness of \lmap with transfer learning. 
We choose hyperparameters as in \Cref{sec:ic_hyperparams}, except a lower learning rate of $10^{-3}$ and a smaller batch size of 64 due to computational constraints. 
\cref{tab:c10_transfer_r50} reports all the results.

\end{appendices}

\end{document}